\journal{Artificial Intelligence Journal}
 \newcommand\blfootnote[1]{%
   \begingroup
   \renewcommand\thefootnote{}\footnote{#1}%
   \addtocounter{footnote}{-1}%
   \endgroup
 }
\newtheorem{theorem}{Theorem}
\newtheorem{lemma}[theorem]{Lemma}
\newcommand{\NSGAtwo}{NSGA\nobreakdash-II\xspace}
\newcommand{\NSGAthree}{NSGA\nobreakdash-III\xspace}
\newcommand{\SMS}{SMS\nobreakdash-EMOA\xspace}
\newcommand{\SPEA}{SPEA2\xspace}
\newcommand{\SIBEA}{$(\mu+1)$-SIBEA\xspace}
\newcommand{\onemax}{\textsc{OneMax}\xspace}
\newcommand{\LO}{\textsc{Leading\-Ones}\xspace}
\newcommand{\leadingones}{\LO}
\newcommand{\TZ}{\textsc{Trailing\-Zeros}\xspace}
\newcommand{\jump}{\textsc{Jump}\xspace}
\newcommand{\zerojump}{\textsc{Zero\-Jump}\xspace}
\newcommand{\omm}{\textsc{OMM}\xspace}
\newcommand{\xOMM}[1]{\text{$#1$\textsc{OMM}}\xspace}
\newcommand{\mOMM}{\xOMM{m}}
\newcommand{\mLOTZ}{\text{$m$\textsc{LOTZ}}\xspace}
\newcommand{\cocz}{\textsc{COCZ}\xspace}
\newcommand{\lotz}{\textsc{LOTZ}\xspace}
\newcommand{\ojzj}{\textsc{OJZJ}\xspace}
\newcommand{\mlotz}{$m$\textsc{LOTZ}\xspace}
\newcommand{\mcocz}{$m$\textsc{COCZ}\xspace}
\newcommand{\mJump}{\text{$m\text{\textsc{OJZJ}}_k$}\xspace}
\DeclareMathOperator{\mutate}{mutate}
\newcommand{\R}{\ensuremath{\mathbb{R}}}
\newcommand{\Q}{\ensuremath{\mathbb{Q}}}
\newcommand{\N}{\ensuremath{\mathbb{N}}} 
\newcommand{\Z}{\ensuremath{\mathbb{Z}}}
\newcommand{\calA}{\ensuremath{\mathcal{A}}} 
\newcommand{\calB}{\ensuremath{\mathcal{B}}}
\newcommand{\fmax}{f_{\mathrm{max}}}
\let\originalleft\left
\let\originalright\right
\renewcommand{\left}{\mathopen{}\mathclose\bgroup\originalleft}
\renewcommand{\right}{\aftergroup\egroup\originalright}
\newcommand{\nootherflip}[1][n-1]{\left(1-\frac{1}{n}\right)^{#1}}
\DeclarePairedDelimiter\floor{\lfloor}{\rfloor}
\DeclarePairedDelimiter\ceil{\lceil}{\rceil}
\newcommand{\abs}[1]{\left| #1\right|\xspace}
\let\oldsqrt\sqrt
\def\hksqrt{\mathpalette\DHLhksqrt}
\def\DHLhksqrt#1#2{\setbox0=\hbox{$#1\oldsqrt{#2\,}$}\dimen0=\ht0
   \advance\dimen0-0.2\ht0
   \setbox2=\hbox{\vrule height\ht0 depth -\dimen0}%
   {\box0\lower0.4pt\box2}}
\renewcommand\sqrt\hksqrt
\DeclareMathOperator{\LOm}{LO}
\DeclareMathOperator{\TZm}{TZ}
\newcommand{\corners}{\ensuremath{C_m}\xspace}
\newcommand{\cliffs}{\ensuremath{K_{m,k}}\xspace}
\newcommand{\cornersjump}{\ensuremath{C_{m,k}}\xspace}
\newcommand{\set}[1]{\{#1\}}
\newcommand{\gcstring}[3]{$(#1,#2,\ldots,#3)$-bitstring\xspace}
\newcommand{\gabstring}{\gcstring{a_1}{a_2}{a_{m'}}}
\DeclareMathOperator{\maOMM}{OMM}
\DeclareMathOperator{\maCOCZ}{COCZ}
\DeclareMathOperator{\maLOTZ}{LOTZ}
\DeclareMathOperator{\maOJZJ}{OJZJ}
\newcommand{\SOMM}{S_m^{\maOMM}}
\newcommand{\SCOCZ}{S_m^{\maCOCZ}}
\newcommand{\SLOTZ}{S_m^{\maLOTZ}}
\newcommand{\SOJZJ}{S_{m,k}^{\maOJZJ}}
\newcommand{\frontOMM}{M_m^{\maOMM}}
\newcommand{\frontCOCZ}{M_m^{\maCOCZ}}
\newcommand{\frontLOTZ}{M_m^{\maLOTZ}}
\newcommand{\frontOJZJ}{M_{m,k}^{\maOJZJ}}
\DeclareMathOperator{\half}{half}
\begin{document}
{\sloppy

\begin{frontmatter}



\title{Near-Tight Runtime Guarantees for Many-Objective Evolutionary Algorithms}

\author[label1]{Simon Wietheger}
\affiliation[label1]{organization={Algorithms and Complexity Group, TU Wien},
            city={Vienna},
            country={Austria}}

\author[label2]{Benjamin Doerr}
\affiliation[label2]{organization={Laboratoire d’Informatique (LIX), CNRS, École Polytechnique, Institut Polytechnique de Paris},
            city={Palaiseau},
            country={France}}

\begin{abstract}
Despite significant progress in the field of mathematical runtime analysis of multi-objective evolutionary algorithms (MOEAs), the performance of MOEAs on discrete many-objective problems is little understood. In particular, the few existing performance guarantees for classic MOEAs on classic benchmarks are all roughly quadratic in the size of the Pareto front.

In this work, we consider a large class of MOEAs including the (global) SEMO, SMS-EMOA, balanced NSGA-II, NSGA-III, and SPEA2. For these, we prove near-tight runtime guarantees for the four most common benchmark problems OneMinMax, CountingOnesCountingZeros, LeadingOnesTrailingZeros, and OneJumpZeroJump, and this for arbitrary numbers of objectives.
Most of our bounds depend only linearly on the size of the largest incomparable set, showing that MOEAs on these benchmarks cope much better with many objectives than what previous works suggested. Most of our bounds are tight apart from small polynomial factors in the number of objectives and length of bitstrings. This is the first time that such tight bounds are proven for many-objective uses of MOEAs. 
For the runtime of the SEMO on the LOTZ benchmark in $m \ge 6$ objectives, our runtime guarantees are even smaller than the size of the largest incomparable set. This is again the first time that such runtime guarantees are proven.
\blfootnote{An extended abstract of this work was first published in Parallel Problem Solving from Nature – PPSN XVIII, p. 153-168, 2024, by Springer Nature.}
\end{abstract}



\begin{keyword}

evolutionary multi-objective optimization \sep%
many-objective optimization \sep%
runtime analysis \sep%
theory



\end{keyword}

\end{frontmatter}



\section{Introduction}
\label{sec:intro}

Evolutionary algorithms such as the MOEA/D, \NSGAtwo, \SMS or \SPEA are among the most successful approaches to tackle optimization problems with several conflicting 
objectives~\cite{CoelloLV07,ZhouQLZSZ11}. 
Despite the challenges stemming from the often complex population dynamics in multi-objective evolutionary algorithms (MOEAs), the analysis of MOEAs via theoretical means has made considerable progress in the last twenty years. Starting with simplistic algorithms such as the \emph{simple evolutionary multi-objective optimizer (SEMO)}~\cite{LaumannsTZWD02} and the global SEMO~\cite{Giel03}, this line of research has now reached the maturity to deal with state-of-the-art algorithms such as the MOEA/D, \NSGAtwo, \NSGAthree, \SMS, and \SPEA~\cite{LiZZZ16,ZhengD23aij,WiethegerD23,BianZLQ23,RenBLQ24}.

However, this progress so far was mostly restricted to the analysis of MOEAs on bi-objective optimization problems. In particular, the few mathematical runtime analyses, prior to this work, for the SEMO, global SEMO, and \SMS~\cite{LaumannsTZ04,BianQT18ijcaigeneral,ZhengD24}
for problems with general number $m$ of objectives show runtime guarantees roughly quadratic in the size of the Pareto front; the recent work~\cite{ZhengD24many} proves that the \NSGAtwo cannot optimize the \textsc{OneMinMax} problem in time better than exponential in the size of the Pareto front when the number of objectives is three or more. These results could give the impression that MOEAs have significant difficulties in dealing with larger numbers of objectives, clearly beyond the mere increase of the size of the Pareto front with increasing numbers of objectives. 

In this work, we revisit this twenty years old question and prove significantly stronger performance guarantees, which give a different impression. More specifically, we analyze the runtimes of the SEMO, global SEMO, balanced \NSGAtwo, \NSGAthree, \SMS,  and \SPEA on the \textsc{OneMinMax} (\omm), \textsc{CountingOnesCountingZeros} (\cocz), \textsc{LeadingOnesTrailingZeros} (\lotz), and \textsc{OneJumpZeroJump} (\ojzj) problems for a general (even) number $m$ of objectives. We prove runtime guarantees showing that these algorithms compute the Pareto fronts of these problems in an expected time (number of function evaluations) that is linear in the size of the largest set of incomparable solutions (apart from small polynomial factors in the number $m$ of objectives and the bitstring length~$n$). For all benchmarks except \lotz and \ojzj with large jump size, this number coincides with or is close to the size of the Pareto front.
Since naturally the size of the Pareto front is a lower bound for these runtimes, these guarantees are tight apart from the small factors. In the bi-objective case ($m=2$) our results match the state-of-the-art bounds (apart from constant factors).

Together with the parallel and independent works on the \NSGAthree~\cite{OprisDNS24} and \SPEA~\cite{RenBLQ24}, these are the first that tight runtime guarantees for these MOEAs for general numbers of objectives, and they improve significantly over the previous results with their quadratic dependence on the Pareto front size. 

We prove a meta-theorem that gives upper bounds on the runtime of arbitrary iterative population-based MOEAs that (1)~fulfill the natural \emph{monotonicity property} that solutions only vanish from the population if they are dominated by another solution, and (2)~that have a positive probability to select any specific individual from the population and mutate it into any specific solution in Hamming distance~1.

We demonstrate that this meta-theorem can be applied to the (global) SEMO, \SMS, balanced \NSGAtwo, \NSGAthree, and \SPEA, giving novel bounds for all these algorithms. These bounds are mostly similar over the different algorithms, which extends this impression from the existing results for bi-objective optimization to many-objective optimization. However, our bounds also give some detailed information on how the parameters influence the runtime guarantee.
    
\section{Previous Work} 
\label{sec:prevWork}

The mathematical runtime analysis of randomized search heuristics is an active area of research for more than 30 years now, see~\cite{NeumannW10,AugerD11,Jansen13,ZhouYQ19,DoerrN20}. Since around 20 years, also the runtime of multi-objective evolutionary algorithms (MOEAs) has been analyzed with mathematical means. Starting with simple toy algorithms like the \emph{simple evolutionary multi-objective optimizer (SEMO)} \cite{LaumannsTZWD02} or the \emph{global SEMO (GSEMO)} \cite{Giel03}, the field has steadily progressed and is now able to also predict the runtimes of state-of-the-art algorithms such as the MOEA/D~\cite{LiZZZ16}, \NSGAtwo~\cite{ZhengD24}, \NSGAthree~\cite{WiethegerD23}, SMS-EMOA~\cite{BianZLQ23}, and \SPEA~\cite{RenBLQ24}.

Looking closer at the results obtained, we note that the vast majority of the runtime analyses of MOEAs consider only bi-objective problems. The sporadic results regarding more than two objectives appear less mature, and the runtime guarantees are far from the (mostly trivial) existing lower bounds. 

One natural reason for the additional difficulty of runtime analyses for many-objective problems, visible from comparing the proofs of results for two and for more objectives, is the richer structure of the Pareto front. In bi-objective problems, the Pareto front has a one-dimensional structure. Hence the typical runtime analysis first estimates the time to find some solution on the Pareto front and then regards how the MOEA progresses along the Pareto front in the only two directions available. For problems with more objectives, the Pareto front is higher-dimensional, and hence there are many search trajectories from the first solution on the Pareto front to a particular solution. 

The main runtime results for more than two objectives are the following. Already the journal version~\cite[Section~V]{LaumannsTZ04} of the first runtime analysis work on MOEAs~\cite{LaumannsTZWD02} contains two many-objective runtime results, namely proofs that the SEMO computes the Pareto front of \mcocz and \mlotz (which are the $m$-objective analogues of the classic \cocz and \lotz benchmarks) with problem size $n$ and a constant, even number $m\ge 4$ of objectives in an expected number of $O(n^{m+1})$ function evaluations. 
While the result for \mlotz naturally extends the $O(n^3)$ bound for the bi-objective \lotz problem, the same is not true for \mcocz, where the bi-objective runtime guarantee is $O(n^2 \log n)$. Considerably later, the bounds for \mcocz were slightly improved in~\cite{BianQT18ijcaigeneral}, namely to $O(n^m)$ for $m > 4$ and to $O(n^3\log n)$ for the special case $m=4$. As often in the runtime analysis of MOEAs, the complicated population dynamics prevented the proof of any interesting lower bounds, so only the trivial bound $\Omega(n^ {m/2} \Theta(m)^{-m/2})$, which is the size of the Pareto front for both problems, is known. 

Huang, Zhou, Luo and Lin \cite{HuangZLL21} analyzed how the MOEA/D~\cite{ZhangL07} optimizes the benchmarks \mcocz and \mlotz. As the MOEA/D decomposes the multi-objective problem into several single-objective subproblems and solves these in a co-evolutionary way, this framework is fundamentally different from the MOEAs regarded in this work, so we do not discuss these results in more detail.

In \cite{ZhengD24}, the expected runtime on the $\ojzj_k$ problem for arbitrary (even) numbers $m$ of objective was shown to be $O(n^k \frontOJZJ  \SOJZJ)$ for the global SEMO and $O(n^k \frontOJZJ \mu)$ for the \SMS \cite{BeumeNE07}, where $\frontOJZJ = (2\frac{n}{m} - 2k +3)^{m/2}$ and $\SOJZJ \le (2\frac{n}{m} +1)^{m/2}$ denote the size of the Pareto front and the size of the largest set of incomparable solutions for this problem, respectively, and $\mu \ge \SOJZJ$ denotes the size of the population of the \SMS.%
\footnote{We remark that \cite{ZhengD24} erroneously assume $\SOJZJ = \frontOJZJ$ and state the bounds as $O(n^k (\frontOJZJ)^2)$ and $O(n^k \frontOJZJ  \mu)$, respectively. They address this issue in their long version~\cite{ZhengD23smsarxiv}.}

Surprisingly, the \NSGAtwo~\cite{DebPAM02} has enormous difficulties with discrete many-objective problems. Zheng and Doerr \cite{ZhengD24many} showed that this prominent algorithm with any population size that is linear in the Pareto front size takes an at least exponential time (in expectation and with high probability) to optimize the \omm problem when the number of objectives is three or more. The proof of this result suggests that this is an intrinsic problem of the crowding distance, and that similar negative results hold for the other benchmark problems regarded in this work. The result of~\cite{ZhengD24many} was recently extended to the LOTZ benchmark~\cite{DoerrKK25}.

The \NSGAthree~\cite{DebJ14} copes better with more objectives. This was proven for the $3$-objective \omm problem with a runtime guarantee of $O(\mu n \log n)$ function evaluations (when the population size $\mu$ is at least the size $(\frac{n}{2}+1)^2$ of the Pareto front) \cite{WiethegerD23}.
In a very recent parallel and independent work, Opris et al.~\cite{OprisDNS24} proved that the monotonicity property of the \NSGAthree used in the analyses of the $3$-objective \omm problem holds for many benchmarks and numbers of objectives. They used this property to prove the runtime of the \NSGAthree (for a proper choice of reference points and population size $\mu$) to be $O(\mu n \log n)$ on \mOMM and \mcocz, and  $O(\mu n^2)$ on \mlotz, for arbitrary but constant and even $m$.

In another parallel and independent work, Ren et al.~\cite{RenBLQ24} proved a similar property for the \SPEA. Building on the proofs of Opris et al.~\cite{OprisDNS24}, they gave runtime bounds for arbitrary MOEAs that 
(1) have this monotonicity property, 
(2) for which there is $\mu\in\N$ such that the parent population is of size at most $\mu$ in each iteration,
(3) for which there is $c\in [\frac{1}{\mu}, O(1)]$ such that the offspring population is of size $c\mu$ in each iteration,
and, for the production of this offspring,
the MOEA uses (4) uniform selection and (5) bitwise mutation or one-bit mutation.%
\footnote{At the beginning of their Section~3, Ren et al.~\cite{RenBLQ24} state that $\mu$ is the parent population size, while in their theorems and proofs they use $\mu$ as the \emph{maximum} population size. We remark that their meta-theorem is applicable to MOEAs of changing population size with a fixed number of offspring (such as the (global) SEMO) only under the latter definition of $\mu$ as otherwise $c$ is not well defined.}
For such MOEAs, they proved that the expected number of fitness evaluations to find the full Pareto front is bounded from above by $O(\mu n \min\set{m \ln(n), n})$ for $\mOMM$, $O(\mu n^2)$ for \mlotz, and $O(\mu n^k \cdot \min\set{mn, 3^{m/2}})$ for \mJump, for arbitrary (even) number $m$ of objectives.
To apply this meta-theorem to the \SMS, \NSGAthree, and \SPEA, one has to take a population size of at least the maximum size $S$ of a set of incomparable solutions in the respective benchmark. For the GSEMO, one can use the theorem with $\mu=S$.

\section{Our Contribution}\label{sec:ownResults}
We propose and prove a meta-theorem that gives runtime guarantees for a general and natural class of MOEAs on the most prominent benchmarks \mOMM, \mcocz, \mlotz, and \mJump. 
We apply this theorem to the (global) SEMO, \SMS, balanced \NSGAtwo, \NSGAthree, and the \SPEA. The resulting runtime guarantees, summarized in Table~\ref{tab:results}, significantly improve over the state-of-the art bounds.

\begin{table*}[]
\tiny
    \begin{center}
 \renewcommand{\arraystretch}{2}
        \begin{tabular*}{\textwidth}{l|c|l|c|c}
    \toprule
        & \makecell{asymptotic\\runtime guarantee} 
        & \makecell{GSEMO, SEMO} 
        & \makecell{\SMS, \\ \NSGAthree,\\ \SIBEA}
        &  \SPEA
        \\\midrule
        \mOMM
            & $(\frac{m}{\ln(n)}+\frac{m^2}{n}+1)n\ln(n) Q$ 
            & $Q = \SOMM = (\frac{2n}{m}+1)^{\frac{m}{2}}$
            & \multirow{4}{*}{$Q = \mu$} 
            & \multirow{3}{*}{$Q = \frac{\lambda}{n} + \mu$} 
        \\\cline{1-3}
        \mcocz
            & $(\frac{m}{\ln(n)}+\frac{m^2}{n}+1)n\ln(n) Q$ 
            & $Q = \SCOCZ = (\frac{n}{m}+1)^{\frac{m}{2}}$
            & 
            & 
        \\\cline{1-3}
        \mlotz 
            & $\max\set{\frac{n^2}{m}, mn\ln(\frac{n}{m}+1)} Q$
            & $Q = \SLOTZ = \Theta((\frac{2n}{m}+1)^{m-1})$
            & 
            & 
        \\\cline{1-3}\cline{5-5}
        \mJump 
            & $m n^k Q$
            & $Q = \SOJZJ = O((\frac{2n}{m}+1)^{\frac{m}{2}})$
            & 
            & $Q = \frac{\lambda \ln(n)}{n^k} + \mu$ 
        \\\bottomrule
    \end{tabular*}
    \end{center}
    \caption{Overview on our results. In the first columns, we give the asymptotic orders of magnitude of our upper bounds on the expected number of fitness function evaluations several MOEAs take to compute the Pareto front on different benchmarks. These bounds are identical over the different algorithms apart from the algorithm-specific term~$Q$. The value $S_m^{B}$ (or $S_{m,k}^{B})$ refers to the size of a largest set of incomparable solutions of a benchmark $B$. 
    For the \SMS, \SIBEA, \NSGAthree, and \SPEA, we assume that the parent population size $\mu$ is at least $S_m^{B}$ (or $S_{m,k}^B$) for the respective benchmark $B$.
    The \NSGAthree is assumed to employ a suitable set of reference points. The \SPEA has the offspring population size $\lambda$ as additional parameter. For details, see Section~\ref{sec:otherAlgos}. For the SEMO, we have a stronger bound on the \mlotz benchmark, see Theorem~\ref{thm:results_semo}.      
    }
    \label{tab:results}
\end{table*}

We first compare our results against the state-of-the-art before ours and the parallel works~\cite{OprisDNS24,RenBLQ24}.
There, we give the first many-objective runtime guarantee for each of the analyzed algorithms on $\mOMM$. 
For the \mcocz benchmark, the only previous results on algorithms studied in this work regard the SEMO algorithm. Here, our 
\[O((\tfrac nm + 1)^{m/2} (\tfrac{m}{\ln(n)}+\tfrac{m^2}{n}+1)n\ln(n))
\]
bound improves considerably over the classic $O(n^{m+1})$ runtime guarantee of~\cite{LaumannsTZ04} and the more recent $O(n^m)$ bound in~\cite{BianQT18ijcaigeneral}, e.g., by a factor 
$\Theta(\frac{n^{m/2-1}}{\ln(n)})$ in the most relevant setting $m=\Theta(1)$.

For the \mlotz benchmark,  the only existing runtime guarantee is $O(n^{m+1})$ for the SEMO and the case that $m$ is constant~\cite{LaumannsTZ04}. We extend this bound to the bound $O((\frac{2n}{m}+1)^{m-1} \max\set{\frac{n^2}{m}, mn\ln(\frac{n}{m}+1)})$ for general $m$, and prove comparable bounds for all other algorithms. For the SEMO, we improve the previous $O(n^{m+1})$ bound for constant $m$ to $\frac{n^2}{m} O(n \log n)^{m/2}$ for all $4 \le m = o(\sqrt{n / \log n}\,)$.

For the \mJump benchmark, a runtime guarantee of $O((\frac{2n}{m}-2k+3)^m n^k)$ was proven for the global SEMO and the \SMS (with optimal parameters), see~\cite{ZhengD24}. We improve this to $O((\frac{2n}{m}+1)^{m/2} m n^k)$, valid for all algorithms (with suitable parameters) except the SEMO (which cannot optimize this benchmark).

We note that all previous bounds for $m>4$ objectives are quadratic in the Pareto front size (or even quadratic in the size of the largest incomparable set), times some small polynomial in $m$ and $n$. Our results show that this quadratic dependence is not necessary and merely stems from the difficulty to analyze many-objective MOEAs: For each considered benchmark except \mlotz (and \mJump with particularly large values of $k$) we give a runtime guarantee that is linear in the Pareto front size, times some small polynomial in $m$ and $n$.

We now discuss the two works~\cite{OprisDNS24} and~\cite{RenBLQ24} that were conducted in parallel to ours. Opris et al.~\cite{OprisDNS24} only analyze the \NSGAthree and the benchmarks \mOMM, \mcocz, and \mlotz for constant values of $m$; here their bounds agree with ours. It is possible to extend their proofs to non-constant values of $m$, this would lead to bounds larger than ours by roughly a factor of $m$.

The meta-theorem provided by Ren et al.~\cite{RenBLQ24} holds for a large class of MOEAs similar to ours.
Using independently developed proofs that employ similar but more careful techniques, our meta-theorem captures a more general class of MOEAs by eliminating the dependency on specific selection and mutation operators and provides tighter guarantees in most settings:
For $\mOMM$, the bound by \cite{RenBLQ24} is tighter than ours only in the exotic case of $m= \Omega(\frac{n}{\ln(n)})$ (by a factor of $\ln(n)$). Otherwise, our bound is tighter by a factor of $\frac{m}{1+m/\ln(n)}\approx \ln(n)$. 
For \mlotz, our guarantee improves over their bound by a factor of $m$ if $m^2 \le \frac{n}{\ln(n/m)}$ and at least match their bound in the exotic remaining cases.
For \mJump, we improve over their bound by a factor of $\min\set{n, \frac{3^{m/2}}{m}}$.

We acknowledge that when applying our meta-theorem to the \SPEA and \NSGAthree, we heavily rely on the monotonicity properties proven in these two works \cite{RenBLQ24} and \cite{OprisDNS24}. In contrast, the general approach of dealing with many objectives is novel and, as visible from the stronger results for larger numbers $m$ of objectives, different.

In the last technical section, we demonstrate the flexibility of our meta-theorem by extending our results to the heavy-tailed mutation operator proposed in~\cite{DoerrLMN17}. As in many previous works, this mutation operator does not affect the asymptotic runtime guarantees for unimodal functions (or multi-objective problems with unimodal objectives), but gives a speed-up  by a factor of $k^{\Omega(k)}$ on the \mJump benchmark. 

We note without explicit proof (but this is easily visible from our proofs) that all our results also hold for variants of the algorithms that use crossover with constant probability less than one. 

This work significantly extends the conference version~\cite{WiethegerD24}, among others, by explicitly formulating the meta-theorem, by analyzing the balanced \NSGAtwo and the \SPEA, by including heavy-tailed mutation, and by proving a strongly improved asymptotic runtime guarantee for the SEMO on the \mlotz benchmark when $m$ is not too large.

\section{Preliminaries}
\label{sec:prelims}
Let $\N = \set{1,2,\ldots}$ and for $n\in \N$, let $[n] = \set{1,\ldots, n}$.
Whenever we are speaking of how close one bitstring is to another, we are referring to their Hamming distance.
Many of our proofs rely on two well-known bounds in probability theory, which we give here for completeness and reference in our proofs by name.
First, a \emph{union bound} states that for some finite set of events $\set{E_1,\ldots, E_\ell}$ the probability that at least one of the events happens is at most the sum over the individual probabilities, that is, $\Pr[\bigcup_{i\in[\ell]}E_i] \le \sum_{i\in[\ell]} \Pr[E_i]$.

Second, we employ variants of the \emph{multiplicative Chernoff bound}.
Let $X_1, \ldots, X_\ell$ be independent random variables with values in $\set{0,1}$ and let $X = \sum_{i\in[\ell]} X_i$.
Then for any $0<\delta <1$ we have 
\begin{equation*}
    \Pr\left[X \le (1-\delta)E[X]\right] \le \exp\left(-\tfrac{1}{2} \delta^2 E[X]\right).
\end{equation*}
In particular, we have $\Pr[X \le \frac{1}{2}E[X]] \le \exp(-\frac{1}{8} E[X])$.
Further, for any $0 \le \delta \le 1$ we have 
\begin{equation*}
    \Pr\left[X \le (1+\delta)E[X]\right] \le \exp\left(-\tfrac{1}{3} \delta^2 E[X]\right).
\end{equation*}
The above bounds can be found in, e.g., \cite[Lemma 1.5.1, Theorems~1.10.1 and 1.10.12]{Doerr20bookchapter}.

\subsection{Multi-Objective Optimization}
\label{sec:moo}
Let $m\in \N$. 
An $m$-objective function $f$ is a tuple $(f_1, \ldots, f_m)$ such that $f_i\colon \Omega \rightarrow \R$ for some search space $\Omega$, for all $i\in [m]$.
We define the objective value of $x\in\Omega$ to be $f(x) = (f_1(x),\ldots,f_m(x))$.
There is usually no solution that maximizes all $m$ objective functions at the same time. 
Therefore we resort to the solution concepts of domination and Pareto optimality.
For $x,y\in \Omega$ we write $x \succeq y$ if and only if $f_i(x) \ge f_i(y)$ for all $i\in [m]$ and say that $x$ \emph{dominates} $y$.
If additionally $f_j(x) > f_j(y)$ for some $j\in[m]$, we say that $x$ \emph{strictly dominates} $y$ and write $x \succ y$. 
A solution $x\in \Omega$ is Pareto-optimal if it is not strictly dominated by any other solution. The \emph{Pareto front} is the set of objective values of Pareto-optimal solutions.

Given an algorithm and an objective function, we are interested in the number of function evaluations until the population \emph{covers} the Pareto front, that is, until for all values $p$ on the Pareto front there is a solution $x$ in the population such that $f(x)=p$.
We note that for each MOEA $\calA$ analyzed in this work there is a $\lambda\in\N$ such that $\calA$ creates precisely $\lambda$ new individuals per iteration and thus performs $\lambda$ function evaluations per iteration.
Thus we usually analyze the number of iterations until the Pareto front is covered. To obtain the runtime in terms of fitness evaluations, we multiply this value by $\lambda$, and add the cost for evaluating the initial population (1 evaluation for the SEMO and GSEMO, $\mu$ evaluations for the \NSGAtwo and \SMS with population size $\mu$, and $\max\set{\lambda, \mu}$ evaluations for the \SPEA).

\subsection{Benchmarks}
\label{sec:benchmarks}
The four established multi-objective benchmarks that we consider are defined on the search space of bitstrings of length $n$ for some $n\in\N$; they are defined for even numbers $m$ of objectives. Let $m' = m/2$ and assume that $m'$ divides $n$. Then our benchmarks are obtained by partitioning the individuals (which are bitstrings of length~$n$) into $m'$ blocks of size $n' = \frac{n}{m'}$ and applying a bi-objective function to each block.
For an individual $x\in \set{0,1}^n$ and $i\in [m']$, we define $x^i$ to be the $i$th block of $x$, that is the substring from $x_{n'(i-1)+1}$ to $x_{n'i}$. 
We define $|x^i|_1 = \sum_{j=(i-1)n'+1}^{in'} x_j$ (and $|x^i|_0 = n'-|x^i|_1$) to denote the number of 1-bits (and 0-bits) in the $i$th block.

\paragraph*{$m$OneMinMax (\mOMM)}
The objective function \mOMM translates the well-established \onemax benchmark in a setting with $m=2m'$ objectives for some $m'\in \N$.
Intuitively, the bitstring is divided into $m'$ equally sized blocks that each contribute two objectives, the number of 1-bits and the number of 0-bits in that block.
The bi-objective case of $m'=1$ was proposed  by \cite{GielL10} and later generalized to arbitrary $m'$ \cite{ZhengD24many}.
Let $m',n'\in \N$ and $n = m'n'$.
For all $x\in \set{0,1}^n$, define $\mOMM(x) = (f_1(x),\ldots,f_{m}(x))$, where for all $i\in[m']$ we have
\begin{align*}
    f_{2i}(x) =  |x^i|_1 \text{\quad and \quad}
  f_{2i-1}(x) =  n'-|x^i|_1.
\end{align*}

The benchmark \mOMM is special in the sense that each possible objective value lies on the Pareto front, {giving $\SOMM = \frontOMM = \left(n'+1\right)^{m'}$ for the size $\SOMM$ of a largest set of incomparable solutions and the size $\frontOMM$ of the Pareto front.}

\paragraph*{$m$CountingOnesCountingZeros (\mcocz)}
The \cocz benchmark and its multi-objective variant \mcocz \cite{LaumannsTZ04} are closely related to $(m)$\omm. 
However, the objectives cooperate on the first half of the bitstring and only the second half is evaluated just like for \mOMM.
Formally, let $m',n'~\in~\N$, $n = 2bm'$, and $m=2m'$.
Then for all $x\in \set{0,1}^n$, define $\mcocz(x) = (f_1(x),\ldots,f_{m}(x))$, where for all $i\in[m']$ we have
\begin{align*}
f_{2i}(x) &=  \sum_{j=1}^{bm'} x_j + \sum_{j=(i-1)b+1}^{ib} x_{bm'+j} \text{\quad and}\\ 
f_{2i-1}(x) &=  \sum_{j=1}^{bm'} x_j + \sum_{j=(i-1)b+1}^{ib} (1 - x_{bm'+j}). 
\end{align*}
We immediately see that exactly the solutions with the first half of the bits equal to one are on the Pareto front. Consequently, for $n'=\frac{n}{m'}=2b'$, $\frontCOCZ=(\frac{n'}{2}+1)^{m'}$ is the size of Pareto front for the \mcocz problem. Since any two solutions that agree in the second half of the bit string are comparable (at least one dominates the other), this number is also the maximum size {$\SCOCZ = \frontCOCZ$} of any set of pairwise non-dominating individuals. See~\cite{OprisDNS24} for a formal proof of these facts. 

\paragraph*{$m$\textsc{LeadingOnesTrailingZeros} (\mlotz)}
The \mLOTZ objective function is the many-objective variant of the bi-objective \lotz benchmark \cite{LaumannsTZ04}.
Intuitively, it has two objectives per block, one being the number 1-bits up to the first 0-bit and the other being the number of 0-bits behind the last 1-bit.
Formally, let $m',n'\in \N$, $n = m'n'$, and $m=2m'$.
Then for all $x\in \set{0,1}^n$, define $\mLOTZ(x) = (f_1(x),\ldots,f_{m}(x))$, where for all $i\in[m']$ we have
\begin{align*}
f_{2i}(x) &=  \sum_{j=(i-1)n'+1}^{in'}\prod_{j'=1}^j x_{j'}  \text{\quad and}\\ 
f_{2i-1}(x) &=  \sum_{j=(i-1)n'+1}^{in'}\prod_{j'=j}^{in'} (1- x_{j'}).
\end{align*}

Observe that $\frontLOTZ=\frontOMM=(n'+1)^{m'}$ is the size of the Pareto front for the \mLOTZ problem.
However, the size $\SLOTZ$ of a largest set of pairwise non-dominating individuals is asymptotically tightly bounded by $\SLOTZ \le (n'+1)^{2m'-1}$ \cite{OprisDNS24} and thus almost quadratic in $\frontLOTZ$.

\paragraph*{$m\textsc{OneJumpZeroJump}_k$ (\mJump)}
The objective function $\mJump$ is the recently introduced many-objective variant \cite{ZhengD24} of the bi-objective \ojzj benchmark \cite{DoerrZ21aaai}.
It has two objectives per block, one for the number of 1-bits and one for the number of 0-bits. However, it has a fitness valley with decreasing objective value if the number of 0-bits or 1-bits is in $[k]$, where the \emph{jump size} $k$ is a parameter of the benchmark.
Formally, let $m',n'\in \N$, $n = m'n'$ and $m=2m'$.
Then for all $x\in \set{0,1}^n$, define $\mJump(x) = (f_1(x),\ldots,f_{m}(x))$, where for all $i\in[m']$ we have
    $f_{2i}(x) = \jump_k(x^i)$ and $f_{2i-1}(x) = \zerojump_k(x^i)$ with 
\begin{align*}
    \jump_k(x) = \begin{cases} 
                        |x|_1 +k, \text{\quad if } |x|_1 \le n'-k \text{ or } |x|_1 = n';\\  
                        n'-|x|_1, \text{\quad else;}  
                  \end{cases} \\
    \zerojump_k(x) =  \begin{cases} 
                         |x|_0 +k, \text{\quad if } |x|_0 \le n'-k \text{ or } |x|_0 = n';\\  
                        n'-|x|_0, \text{\quad else.}  
                      \end{cases} 
\end{align*}

As in \cite{DoerrZ21aaai}, we assume that $2 \le k \le \frac{n'}{2}$.
The 2-objective \ojzj benchmark with jump size $k$ has a Pareto front of size $n-2k+3$ \cite{DoerrZ21aaai}. 
Thus, the Pareto front of \mJump, which corresponds to \ojzj in $m'$ individual blocks of size $n'$, is of size $\frontOJZJ = \left(n'-2k+3\right)^{m'}$.
As the objective value is characterized by the number of 1-bits in each block, the size $\SOJZJ$ of a largest set of pairwise non-dominating individuals is upper bounded by $\SOJZJ \le \left(n'+1\right)^{m'} = \SOMM$.
Note that $\SOJZJ \approx \frontOJZJ$ for small values of $k$, which is the most interesting case as otherwise usually very large runtimes are observed.

\section{Mathematical Analyses of GSEMO}
\label{sec:analysis_gsemo}

We start our mathematical runtime analyses by first only regarding the global SEMO (GSEMO) algorithm. This algorithm has been intensively studied for more than twenty years~\cite{Giel03}, so we expect more readers to be familiar with this algorithm than with the other algorithms studied in this work, for which the first runtime analyses were conducted only in the very recent past~\cite{ZhengLD22,BianZLQ23,WiethegerD23,RenBLQ24}.

The global SEMO starts the first generation with a single, random individual in the population.
In each iteration, it uniformly at random chooses an individual from the current population and mutates it to a new solution $x'$ by independently flipping each bit of the parent with some probability~$p$ (bitwise mutation). 
Here we assume the conventional mutation rate $p = \frac{1}{n}$.
The GSEMO then removes all solutions from the population that are dominated by $x'$ and adds $x'$ to the population if and only if it is not strictly dominated by a solution in the population, see Algorithm~\ref{alg:gsemo}.
This way, the population stores exactly one individual for each encountered objective value that is not strictly dominated by any other encountered objective value.
\begin{algorithm2e}[ht]%
\Input{%
objective function $f=(f_1,\ldots, f_m)$,\\ length of bitstrings $n\in\N$
\\
}
    Generate $x_0\in \set{0,1}^n$ uniformly at random and let $P_0 \coloneqq \set{x_0}$\\
    \For{$t = 1, 2, \ldots$}{
        Select $x$ from $P_{t-1}$ uniformly at random and let $x'\coloneqq \mutate(x)$\\
        $P_t \coloneqq \set{p \in P_{t-1} \mid x' \not\succeq p}$\\
        \If{there is no $p \in P_t$ such that $p \succ x'$}{
                $P_t \coloneqq P_t \cup \set{x'}$
            }
        }
\caption{(Global) SEMO}
\label{alg:gsemo}
\end{algorithm2e}%
Observe that by evaluating the objective function only once after a new individual is created and storing the value for future comparisons, the number of evaluations is exactly the number of generations (plus $1$ for the initial individual). 

We start our contribution by giving upper bounds on the optimization time of the GSEMO on the four benchmarks. 
Afterward, in Section~\ref{sec:otherAlgos}, these results are transferred to other MOEAs as well.

Recall that for the benchmarks considered in this work the size of the largest set of incomparable solutions is $\SOMM,\SCOCZ,\SLOTZ,$ or $\SOJZJ$, respectively.
Consequently, these are upper bounds on the population size in any iteration of the GSEMO.
Our analyses heavily build on progress made by flipping single bits each iteration.
We thus note that in each generation the probability of the GSEMO creating any specific solution $y$ that has a Hamming distance of one to at least one solution $x$ in the current population is at least 
\[\frac{1}{S} \cdot \frac{1}{n} \cdot \nootherflip \ge \frac{1}{en S } \eqqcolon q,\] 
namely by choosing $x$ as parent from the population of size at most $S$ (the size of a largest set of incomparable solutions), flipping the correct bit, and not flipping any other bit.

\subsection{$m$\textsc{OneMinMax}} 
\label{sec:mOMM}

Our main result in this subsection is the following runtime guarantee for the GSEMO on the many-objective OneMinMax problem. 

\begin{theorem}\label{thm:upperBoundOMM}
Let $m',n'\in \N$, $n=m'n'$, and  $m= 2m'$. Consider the GSEMO optimizing \mOMM.
Let $F$ denote the number of fitness evaluations until the population covers the complete Pareto front and let
\begin{align*}
    t \coloneqq \left(\frac{\ln(2)m'+2}{\ln(n)} + 16 \frac{{m'}^2+2m'}{n} + 2\right) e n \ln(n) \SOMM.
\end{align*}
Then $F \le t+1$ with probability at least $(1-\frac{1}{n})^2 = 1 - \Theta(\frac 1n)$ and $E[F] \le (1-\frac{1}{n})^{-2}t+1$.
\end{theorem}

Since, different from~\cite{OprisDNS24} we aim at results also for super-constant $m$, and different from \cite{RenBLQ24}, we aim a results with an as good as possible dependence on $m$, we have to resort to a more complex proof strategy. In particular, to avoid an often unnecessary loss by a factor of $m$, we shall find arguments that distill a parallel progress in the $m'$ blocks of the benchmarks. 

Interestingly, our new proof strategy also suggests what is the real difficulty of solving \mOMM, namely finding the (relatively) few corner individuals (individuals which are all-ones or all-zeros in each block). We analyze this time in Lemma~\ref{lem:walkToCorners}. The bound we prove, which we suspect to be tight, is the asymptotically dominant time in our final runtime guarantee except for the quite exotic situation that $m = \Omega(n / \log n)$. In contrast, once all $2^{m'}$ corners are found, finding the other $\Omega((n')^{m'})$ individuals is relatively fast (see Lemma~\ref{lem:coverInwards}), taking time $O(n \SOMM)$  when $m = O(\sqrt{n / \log n})$ -- for comparison, even for finding a single corner we do not have a better guarantee than $O(n \log(n) \SOMM)$.  

Formally, we define the set \corners of ``corners'' to contain all bitstrings for which each of the $m'$ blocks consists either only of 1-bits or only of 0-bits.
Observe that $\abs{\corners} = 2^{m'}$ and that each bitstring in \corners is the unique individual of the respective objective value.
For both Lemma~\ref{lem:walkToCorners} and \ref{lem:coverInwards}, recall that $q = \frac{1}{en\SOMM}$ is a lower bound on the probability to sample any specific bitstring that has Hamming distance oneto some bitstring in the population in any given iteration.

\begin{lemma}\label{lem:walkToCorners}
Let $m',n'\in \N$, $n=m'n'$, and  $m= 2m'$. Consider the GSEMO optimizing \mOMM and let $T$ denote the number of iterations until the population contains \corners. 
Let 
  \[t_1 \coloneqq \left(\ln(2) \frac{m'}{\ln(n)}+2\right) \frac{1}{q} \ln (n).\] 
  Then $T \le t_1$ with probability at least $1-\frac{1}{n}.$
\end{lemma}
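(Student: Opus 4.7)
The plan is to apply the multiplicative drift theorem separately to each of the $2^{m'}$ corners in $\corners$ and then union-bound the failure events so that the overall failure probability is at most $1/n$. Fix an arbitrary corner $c \in \corners$ and define the potential $D_t$ to be the minimum Hamming distance from any member of $P_t$ to $c$. The key structural observation for \mOMM is that every feasible point is Pareto-optimal (within each block, $f_{2i}(x)+f_{2i-1}(x)=b$ holds identically, so $x \succ y$ is impossible unless $f(x)=f(y)$), and that for the fixed corner $c$ the Hamming distance of $x$ to $c$ depends only on $f(x)$: block~$i$ contributes either $|x^i|_1$ or $b-|x^i|_1$ depending on whether $c^i$ is the all-zero or all-one block. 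Consequently, population members are only ever replaced by offspring of identical objective value, and such replacements preserve the distance to $c$; hence $D_t$ is non-increasing.

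Next, I would establish multiplicative drift on $D_t$. Suppose $D_t=d>0$ and let $x^* \in P_t$ realise the minimum. With probability at least $1/|P_t| \ge 1/\SOMM$ the GSEMO selects $x^*$, and for each of the $d$ positions where $x^*$ and $c$ disagree, the mutation flips exactly that one bit with probability at least $\frac{1}{n}\bigl(1-\frac{1}{n}\bigr)^{n-1}\ge\frac{1}{en}$. By the previous paragraph each such offspring, which has distance $d-1$ from $c$, enters the population. Summing over the $d$ disjoint improving events gives
\[
E[D_t - D_{t+1} \mid D_t = d] \,\ge\, \frac{d}{e\,\SOMM\,n},
\]
i.e.\ multiplicative drift with rate $\delta = 1/(e\,\SOMM\,n)$.

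I would then invoke the standard tail form of the multiplicative drift theorem (Doerr, Johannsen, Winzen). With $X_0 \le n$ and minimum nonzero state $1$, it yields
\[
\Pr\bigl[\tau_c > (\ln n + r)\, e\,\SOMM\,n\bigr] \,\le\, e^{-r}\qquad\text{for all } r\ge 0,
\]
where $\tau_c$ is the first iteration with $c \in P_{\tau_c}$. Choosing $r = \ln n + m'\ln 2$ turns the right-hand side into $1/(n\cdot 2^{m'})$ and the threshold into $(2\ln n + m'\ln 2)\,e\,\SOMM\,n = \bigl(\ln(2)\,m'/\ln n + 2\bigr)\,e\,\SOMM\,n\,\ln n$, exactly the bound in the statement. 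A union bound over the $|\corners|=2^{m'}$ corners gives total failure probability at most $2^{m'}\cdot 1/(n\cdot 2^{m'}) = 1/n$.

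The hard part will be justifying the monotonicity $D_{t+1}\le D_t$ carefully; it rests essentially on the two special features of \mOMM used above, namely that the whole search space is Pareto-optimal and that Hamming distance to any corner is a function of the objective value alone. For the other benchmarks analysed later (\mlotz, \mJump) at least one of these properties fails, so a more delicate potential will have to be designed; but for \mOMM this plan should go through directly.
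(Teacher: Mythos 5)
Your proof is correct and yields exactly the claimed constants, but you reach the per-corner tail bound by a different tool than the paper. The paper fixes the corner $1^n$, tracks the population member with the most 1-bits, stochastically dominates the hitting time by a sum of independent geometric random variables with success probabilities $p_i = i/(e n \SOMM)$, and applies the tail bound for such harmonic geometric sums (Theorem~1.10.35 in \cite{Doerr20bookchapter}) to get $\Pr[T_C \ge (1+\delta) e \SOMM n \ln n] \le n^{-\delta}$; it then invokes symmetry for the other corners and finishes with the same union bound and essentially the same choice of $\delta$ as you. You instead run the multiplicative drift tail theorem on the minimum Hamming distance to each corner, which gives the identical quantitative bound. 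What your route buys: it treats every corner uniformly without the symmetry appeal, and it makes explicit the monotonicity argument that the paper leaves implicit (on \mOMM no strict domination is possible and the distance to a fixed corner is a function of the objective value alone, so population replacements cannot increase the potential) --- this is exactly the point that breaks for \mlotz and \mJump, as you note. What the paper's route buys is a template that reappears in its other lemmas (Lemmas in the appendix for \mJump reuse the geometric-sum tail bound), so the two proofs there share one tool. One cosmetic remark: the tail version of multiplicative drift you quote is due to Doerr and Goldberg rather than the Doerr--Johannsen--Winzen expectation version, but the statement you use is standard and correct.
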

\begin{proof}
We first prove a tail bound for the time $T_C$ until the population contains the corner bitstring $1^n$.
By the symmetry of \mOMM, this bound applies to all other elements in \corners as well.
Hence, applying a union bound over the tail bounds for the individual elements in \corners yields a bound on the time until all elements are covered.

Let $x$ be a member of the population with maximum number of 1-bits.
Let $i = n- |x|_1$ be the number of 0-bits of $x$.
Then the probability of sampling an individual that has $i-1$ many 0-bits in the next iteration is at least $iq$, by creating one of the $i$ bitstrings obtained by flipping a 0-bit in $x$.
After at most $n$ such iterations the corner $1^n$ is sampled.
    
For $i \in [n]$, let $X_i$ be independent geometric random variables, each with success probability $iq$, and let $X=\sum_{i=1}^n X_i$.
Then $X$ stochastically dominates $T_C$, and thus a tail bound for $X$ also applies to $T_C$.
By Lemma~4 of~\cite{DoerrD18} (also found as Theorem~1.10.35 in \cite{Doerr20bookchapter}), a tail bound for sums of geometric random variables with harmonic success probabilities, for all $\delta \ge 0$ we have 
    \begin{align*}
        \Pr[T_C\ge (1+\delta)q^{-1} \ln (n)]
        &\le \Pr[X\ge (1+\delta)q^{-1} \ln (n)] \le n^{-\delta}.
    \end{align*}
 
 By the symmetry of the problem and operators, the bound holds for all elements in \corners.
Letting $\delta = m'\log_n(2)+1$ and by applying a union bound we obtain
\begin{align*}
\Pr[T\le t_1] &\ge 1-\abs{\corners}\cdot \Pr[T_C
  \ge (\delta+1)q^{-1} \ln (n)]\\
 &\ge 1-2^{m'} n^{-\delta}
  = 1-\tfrac{1}{n}.
\end{align*}
We note that this applies for arbitrary starting configurations, as all that we assumed about the initial population was that it is non-empty.
\end{proof}

For Lemma~\ref{lem:coverInwards}, we introduce two kinds of notation:
If a bitstring has $a_i$ bits of value~1 in the $i$th block for all $i\in [m']$, we call it an \gabstring. 
Further, for this benchmark only, we abbreviate the notation of vectors of the Pareto front from $(n'-a_1, a_1, n'-a_2, a_2, \ldots, n'-a_{m'}, a_{m'})$ to $(a_1, a_2, \ldots,a_{m'})$.

\begin{lemma}\label{lem:coverInwards}
Let $m',n'\in \N$, $n=m'n'$, and  $m= 2m'$. Consider the GSEMO optimizing \mOMM starting with a population that contains at least all individuals in \corners. 
Let $T$ denote the number of iterations until the population covers the complete Pareto front and let 
\begin{align*}
    t_2 \coloneqq \max\left\{1, 8 \frac{m'(m'\ln\left(n'+1\right)+\ln(m')+\ln(n))}{n}\right\}
    \frac{2}{q}.
\end{align*}
Then $T \le t_2$ with probability at least $1-\frac{1}{n}$.
\end{lemma}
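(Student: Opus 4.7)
The plan is to fix each target Pareto point $p$ individually, bound the probability that $p$ is not covered within $\lceil t \rceil$ iterations by $1/(n\SOMM)$, and apply a union bound over the at most $\SOMM$ points to obtain the claimed $1-1/n$ probability. This is essentially a longer-path variant of the argument in Lemma~\ref{lem:walkToCorners}.

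For the per-target analysis, I fix $p=(a_1,\ldots,a_{m'})$ in the paper's shorthand and let $c^*\in\corners$ be the corner nearest to $p$ in the lattice, i.e., $c^*_i=0$ if $a_i\le b/2$ and $c^*_i=b$ otherwise, where $b=n/m'$. The lattice distance satisfies $d:=\sum_i\min(a_i,b-a_i)\le n/2$. I construct a monotone lattice path $q_0=c^*,q_1,\ldots,q_d=p$ changing one coordinate by $\pm 1$ toward $p$ at each step. Since the GSEMO never removes a Pareto-optimal individual (it only removes strictly dominated ones), and $c^*$ is covered at time $0$ by hypothesis, each $q_j$ eventually joins the population and remains. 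For the transition $q_{j-1}\to q_j$, if the changing coordinate $i$ has value $s$ at $q_{j-1}$ with $s\le a_i-1\le b/2-1$ (the going-down case is symmetric), the individual at $q_{j-1}$ has at least $b-s\ge b/2+1$ bits whose single-bit flip effects the transition. Hence, conditional on $q_{j-1}$ being in the population, the per-step success probability is at least $(b-s)\tfrac{1}{n}\nootherflip/|P|\ge 1/(2em'\SOMM)=:q$, using $|P|\le\SOMM$. Thus the cover time $T_p$ is stochastically dominated by a sum of $d$ independent geometric random variables with parameter $q$.

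Equivalently, $T_p$ exceeds $t'$ only if fewer than $d$ successes occur in $t'$ Bernoulli($q$) trials. The multiplicative Chernoff bound from the preliminaries gives a failure probability of at most $\exp(-t'q/8)$ whenever $t'q\ge\max\{2d,8\ln(n\SOMM)\}$; it then suffices to verify that the stated $t$ satisfies $tq\ge\max\{n,8\ln(n\SOMM)\}$, which follows by substituting $q^{-1}=2em'\SOMM$, $d\le n/2$, and $\ln(n\SOMM)=\ln n+m'\ln(n/m'+1)$ into the lemma's expression (the $\ln m'$ term in the latter provides slack for absorbing constants). The per-target failure probability is thus at most $1/(n\SOMM)$, and the union bound over the $\SOMM$ Pareto points gives the claim. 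The most delicate step will be matching the exact constants in the regime where the outer max evaluates to~$1$: the bound $q$ above is tight only at the final steps of each path while the typical per-step probability is much larger, so the precise constants likely come from working with the true expected path length $en\SOMM\sum_i(H_b-H_{b-a_i})$ via a refined tail bound for heterogeneous geometric sums, or from exploiting that the same objective vector can be reached through many paths concurrently.
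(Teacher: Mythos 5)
There is a genuine gap, and it is not merely a matter of constants as you suggest at the end. Your verification step requires $tq\ge 2d$ with $d$ up to $n/2$, i.e.\ $tq\ge n$, but with $q=1/(2em'\SOMM)$ the lemma's $t$ gives only $tq=\max\bigl\{1,\,8\tfrac{m'(m'\ln(n/m'+1)+\ln m'+\ln n)}{n}\bigr\}\cdot\tfrac{n}{m'}$, which in the main regime (outer max equal to $1$, $m'\ge 2$) is $n/m'<n$. The root cause is that serializing the progress of all $m'$ blocks into one lattice path of length $d=\sum_i\min(a_i,b-a_i)\le n/2$, with per-step success probability only about $1/(2em'\SOMM)$, yields an expected cover time of order $e\,m'\,n\,\SOMM$ — a factor $m'$ above the lemma's bound $2e\,\SOMM n$. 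Your suggested repairs do not close this: refining the per-step probabilities helps by at most a constant (near the end of each block the probability really is $\Theta(1/(m'\SOMM))$, and across blocks there are $\Theta(n)$ such steps), and ``many concurrent paths'' is not a constant-tuning detail but precisely the missing core idea.

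The paper avoids the $m'$ loss by analyzing the blocks in parallel rather than in sequence: for a fixed target $(a_1,\ldots,a_{m'})$ and the marked individual started at a nearest corner, block $i$ needs at most $a_i\le n/(2m')$ single-bit improvements, and \emph{every} iteration improves a given unfinished block with probability at least $p=1/(2em'\SOMM)$ (at least $n/(2m')$ suitable zero-bits are always available in that block). Comparing with a binomial $X$ with $\lceil t\rceil$ trials, one has $E[X]=p\lceil t\rceil\ge n/m'$, so the threshold $a_i\le n/(2m')\le E[X]/2$ fits the Chernoff bound from the preliminaries, giving per-block failure probability at most $\exp(-\ln m'-m'\ln(n/m'+1)-\ln n)$. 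A union bound over the $m'$ blocks (this is exactly what the $\ln m'$ term in $t$ pays for, not ``slack for constants'') and then over the $\SOMM$ targets gives the claimed $1-\tfrac1n$. The total time is thus governed by the maximum of the per-block times, not their sum, which is why the paper's $t$ is linear in $n\SOMM$ while your path argument cannot get below $m'n\SOMM$. Your structural observations (corners covered initially, Pareto-optimal values never lost, replacement by equal-objective individuals harmless) are correct and match the paper's bookkeeping with the marked individual; it is only the aggregation across blocks that must be changed.
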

\begin{proof}
Consider any objective value $v=(a_1,a_2,\ldots,a_{m'})$ on the Pareto front. 
Let $c_0\in \corners$ be any closest corner to an \gabstring.
We bound the time until an \gabstring is generated by bounding the time until a marked individual $c$ becomes an \gabstring.
Let initially $c = c_0$.
Whenever a Hamming neighbor $c'$ of $c$ is sampled that is closer to any \gabstring than $c$, we update $c$ to be $c'$.
Also, when $c$ is removed from the population, we replace it by any $c' \succeq c$ in the population, which for this benchmark implies $\mOMM(c)=\mOMM(c')$.
The time until the population contains an \gabstring is at most the time until $c$ is an \gabstring.

We first bound the probability that after $t_2$ iterations there are exactly $a_i$ bits of value~1 in the $i$th block of $c$, for any fixed $1\le i \le m'$. 
By symmetry, suppose without loss of generality that $a_i \le \frac{n'}{2}$ and that the $i$th block of $c_0$ is $0^{n'}$.
All $c$ we will encounter have between $0$ and $a_i$ bits with value~1.
The probability of sampling a Hamming neighbor of $c'$ that has flipped one of the at least $\frac{n'}{2}$ bits of value~0 in the $i$th block of $c$ and no other bit is at least 
\[p \coloneqq q\cdot \frac{n'}{2}.\]
After $a_i \le \frac{n'}{2}$ such iterations, $c$ contains the correct number of 1-bits in the $i$th block. 
Thus, for the time $T_i$ until the $i$th block of $c$ is correct we have
    $E[T_i] \le \frac{a_i}{p}.$  
For $j \in [\floor{t_2}]$, let $X_j$ be independent random variables, each with a Bernoulli distribution with success probability $p$.
Let $X=\sum_{j=1}^{\floor{t_2}} X_j$.
Then, due to stochastic domination, $\Pr[T_i > t_2] \le \Pr[X \le a_i -1]$.
By observing $E[X]= p\floor{t_2} \ge n'-1$ we have
\begin{align*}
    \Pr[X \le a_i -1] 
    \le \Pr\left[X \le \tfrac{n'}{2}-1\right] 
    \le  \Pr\left[X \le \tfrac{1}{2} E[X]\right].
\end{align*}
Applying a multiplicative Chernoff bound from Section~\ref{sec:prelims} yields
\begin{align*}
    \Pr[T_i > t_2] 
    &\le \exp\left(-\tfrac{1}{8}  E[X]\right)\\
    &\le \exp\left(-\ln(m')-m'\ln\left(n'+1\right)-\ln(n)\right).
\end{align*}
Using a union bound over all $m'$ blocks gives that any fixed objective value on the Pareto front is not sampled in $\floor{t_2}$ iterations with probability at most 
\begin{align*}
     \exp\left(-\ln(m')-m'\ln\left(n'+1\right)-\ln(n)\right)m'
    = \exp\left(-m'\ln\left(n'+1\right)-\ln(n)\right).
\end{align*}

Let $E$ denote the event that after $\floor{t_2}$ iterations there is still an objective value $(a_1,a_2,\ldots,a_{m'})$ on the Pareto front such that the respective individual $c$ does not contain the correct number of 1-bits in any block.
By applying a union bound we have
\begin{align*}
  \Pr[E] &\le \frontOMM \cdot \exp\left(-m'\ln\left(n'+1\right)-\ln(n)\right)
    =  \exp(-\ln(n)) = \tfrac{1}{n}
\end{align*}
by observing $\frontOMM = \exp(m'\ln(n'+1))$. 
\end{proof}

From the two partial results just proven, we can now derive our main result for the many-objective \omm problem.

\begin{proof}[Proof of Theorem~\ref{thm:upperBoundOMM}]
Let $t_1$ and $t_2$ be as in Lemmas~\ref{lem:walkToCorners} and \ref{lem:coverInwards}, respectively, and let $T$ denote the number of iterations until the population covers the complete Pareto front.
The lemmas yield that $T \le t_1 + t_2$ with probability at least $(1-\frac{1}{n})^2$.

We have that $t \ge t_1 + t_2$ 
by observing that
$t_2 \le (\frac{2}{\ln(n)} + 16\frac{{m'}^2+2m'}{n}) \frac{\ln(n)}{q}$, where we use that $\max\set{\ln(n'+1,\ln(m')} \le \ln(n)$ if $m' \ge 2$. 
If $m' = 1$, we have $t_2 = \frac{2}{q}$ and the above inequality immediately holds.
As each iteration as well as the initialization takes one fitness evaluation, we have $F \le t+1$ with probability at least $(1-\frac{1}{n})^2$.

We employ a simple restart argument to obtain an upper bound on the expected value of $T$, analyzing the success of each sequence of $t$ iterations separately.
Each sequence of $t$ iterations fails to cover the Pareto front with probability at most $1-(1-\frac{1}{n})^2$.
Due to the convergence of the geometric series we have 
    \[E[T] 
    \le \sum_{i=0}^{\infty}  \left(1-\left(1-\tfrac{1}{n}\right)^2\right)^i t
    = \left(1-\tfrac{1}{n}\right)^{-2} t.
    \] 
The statement follows as each iteration takes one fitness evaluation (of the new solution) and an additional evaluation is required for the initial solution in $P_0$, so $F\le T+1$. 
\end{proof}

\subsection{$m$\textsc{CountingOnesCountingZeros}}
\label{sec:mCOCZ}
Due to the similarity between \mOMM and \mcocz, our previous proofs can be adapted to also work for \mcocz.
We first show that with probability at least $1-\frac{1}{n}$ the population after $2 q \ln(n)$ iterations contains an individual such that the cooperative, first half is maximized. From this point on, we employ the same ideas as for Theorem~\ref{thm:upperBoundOMM} by only considering individuals with maximum cooperative part for the progress. A possible slowdown from individuals not on the Pareto front is accounted for via the probability $q$ to generate a particular Hamming neighbor of an existing solution.
\begin{theorem}\label{thm:upperBoundCOCZ}
Let $m',n'\in \N$, $n=m'n'$, and  $m= 2m'$. Consider the GSEMO optimizing \mcocz.
Let $F$ denote the number of fitness evaluations until the population covers the complete Pareto front and let 
 \begin{align*}
   t \coloneqq \left(\frac{\ln(2)m'+2}{\ln(n)} + 16 \frac{{m'}^2+2m'}{n} + 4\right) e n \ln(n)  \SCOCZ.
\end{align*}
Then $F\le t+1$ with probability at least $(1-\frac 1n)^3 = 1-\Theta(\frac 1n)$ and $E[F] \le (1-\frac{1}{n})^{-3} t+1$.
\end{theorem}
\begin{proof}
    We first consider the number $T_0$ of iterations until an individual is generated that optimizes the cooperative part.
    Recall that $b=n'/2$ is the number of bits in each of the $m$ blocks.
    For an individual $x$, we define $\half(x) = \sum_{i=1}^{bm'} (1-x_i)$ to be the number of zeros in the cooperative part.
    For a population~$P$, let $\half(P) = \min_{x\in P} \half(x)$.
    Then $T_0$ is the time until $\half(P) = 0$.
    Observe that $x \succeq y$ implies $\half(x) \le \half(y)$.
    Thus, $\half(P)$ is non-increasing over the generations. 
    Fix any generation with population $P$ and let $\half(P) = i$, which is witnessed by some $x\in P$ with $\half(x) = i$.
    Then with probability at least $iq$ 
    this iteration produces a (non-dominated) offspring $x'$ with $\half(x') = i-1$, namely by creating an offspring that differs from $x$ precisely in any one of the $i$ bits of value 0 in the cooperative parts.
    As $\half(P)$ is non-increasing, after at most $\frac{n}{2}$ such iterations we have $\half(P) = 0$.
    
    For $i \in [\frac{n}{2}]$, let $X_i$ be independent geometric random variables, each with success probability $iq$, and let $X=\sum_{i=1}^{n/2} X_i$.
    Then $X$ stochastically dominates $T_0$.
    By applying Theorem~1.10.35 in \cite{Doerr20bookchapter} we have $\Pr[X \ge 2 q^{-1} \ln(n)] \le \frac{1}{n},$
    that is, with probability at least $1-\frac{1}{n}$ we have $T_0 <  2 q^{-1} \ln(n)$.
    
    Observe that a solution $x$ such that $\half(x)=0$ is never strictly dominated by any other solution.
    Thus we can employ the same proofs as used for Theorem~1 by only considering progress made on individuals $x$ such that $\half(x)=0$.
    This gives that the next
    \[\left(\frac{\ln(2)m'+2}{\ln(n)} + 16 \frac{{m'}^2+2m'}{n} + 2\right) q^{-1} \ln(n)\]
    iterations cover the complete Pareto front with probability at least $(1-\frac{1}{n})^2$.
    Combining the two bounds gives that the GSEMO solves \mcocz after $t$ iterations (and hence at most $t+1$ fitness evaluations) with probability at least $(1-\frac{1}{n})^3$, proving the first part of the statement. 
    We note that this applies for arbitrary starting configurations, as all that we assumed about the initial population was that it is non-empty. 
    
    We employ a simple restart argument to obtain an upper bound on the expected value of $F$.
    As each sequence of $t$ iterations fails to cover the Pareto front with probability at most $1-(1-\frac{1}{n})^3$ and due to the convergence of the geometric series we have 
    \[E[F] 
    \le 1 + \sum_{i=0}^{\infty} \left(1-\left(1-\tfrac{1}{n}\right)^3\right)^i t
    = \left(1-\tfrac{1}{n}\right)^{-3} t +1.\qedhere
    \]
\end{proof}

Comparing the bounds for \mOMM and \mcocz on bitstrings of the same length, we note that the bound for \mcocz is smaller than the one on \mOMM as $\SCOCZ \approx 2^{-m/2}\SOMM$.

\subsection{$m$\textsc{LeadingOnesTrailingZeros}}
\label{sec:mLOTZ}

In contrast to \mOMM and \mcocz, for \mlotz the probability to mutate a solution of a certain objective value into a desirable next solution where one of the values changed by $1$ is less depending on the objective value itself. In fact, this probability is at least $\frac{1}{n}$ for all objective values. With this, we can use the artificial continuation argument of~\cite{DoerrHK11}, that is, pretend that each iteration with this probability gives a progress towards the target (ignoring that the target might have been reached already). Since the probabilities of progress are asymptotically the same for each progress, we can analyze directly how each desired solution is found from the initial individual rather than first going for the corners as in the analysis of \mOMM. 
This gives the following Theorem~\ref{thm:upperBoundLOTZ}.

\begin{theorem}\label{thm:upperBoundLOTZ}
Let $m',n'\in \N$, $n=m'n'$, and  $m= 2m'$.
Consider the GSEMO optimizing \mLOTZ.
Let $F$ denote the number of fitness evaluations until the population covers the complete Pareto front and let
\begin{align*}
    t\coloneqq \max\left\{1, \frac{4 {m'}^2\ln\left(\frac{n}{m'}+1\right)+ 8 m'\ln(n)}{n}\right\} 2 e \frac{n^2}{m'} \SLOTZ.
\end{align*} 
Then $F \le t+1$ with probability at least $1-\frac{1}{n}$ and $E[F] \le{(1-\frac{1}{n})^{-1}t}+1$.
\end{theorem}
\begin{proof}
Let $x_0$ be an individual in the initial population.
Consider any objective value $v = (a_1, n'-a_1, a_2, n'-a_2, \ldots, a_{m'}, n'-a_{m'})$ on the Pareto front and let $s_v$ be the corresponding bitstring.
We bound the time until $s_v$ is sampled by bounding the time until a marked individual $c$ becomes $s_v$.
Let initially $c = x_0$.
Whenever a Hamming neighbor $c'$ of $c$ is sampled, they differ in precisely one block. If the number of leading 1-bits or the number of trailing 0-bits in that block of $c'$ is closer to the respective value in $v$, we update $c$ to be $c'$.
Also, when $c$ is removed from the population, we replace it by any $c' \succeq c$ in the population.
The time until the population contains $s_v$ is dominated by the time until $c = s_v$.
For any fixed $i\in[m']$, let $T_i$ denote the number of iterations until the $i$th block of individual $c$ has exactly $a_{i}$ leading 1-bits and $n'-a_{i}$ trailing 0-bits.
We first bound $\Pr[T_i \le t]$.

Let the distance of $c$ to $s_v$ in block $i$ be defined as 
\[d_c = \max\set{0, a_i - \LOm_i(c)} + \max\set{0, n'-a_i - \TZm_i(c)},\]
where $\LOm_i(c)$ and $\TZm_i(c)$ denote the numbers of leading 1-bits and trailing 0-bits in the $i$th block of $c$, respectively.
Observe that $c = s_v$ if and only if $d_c = 0$, that $d_c \le n'$, and that if $c$ is replaced by an individual $c'$ with $c' \succeq c$, then we have $d_{c'} \le d_{c}$.
Thus, we have $c = s_v$ after at most $n'$ iterations that sample a Hamming neighbor of $c$ that flips either the first 0-bit or the last 1-bit in the $i$th block, depending on $c$ and $v$, and thereby decreases $d_c$ by at least 1.
Each iteration has probability at least $q$ to yield such an improvement.
For $j\in [\floor{t}]$, let $X_j$ be independent random variables, each with a Bernoulli distribution with success probability $q$.
Let $X=\sum_{j=1}^{\floor{t}} X_j$.
Then
$\Pr[T_i > t] \le \Pr\left[X \le n'-1\right].$
Since $t = \max\{1, \frac{4 {m'}^2\ln(n'+1)+ 8 m'\ln(n)}{n}\} 2 q^{-1} n'$,
we have $E[X]= q\floor{t} > 2 n'-1$ and thus
\begin{align*}
    \Pr\left[X \le n'-1\right] 
    \le  \Pr\left[X \le \tfrac{1}{2} E[X]\right].
\end{align*}
Applying a multiplicative Chernoff bound and exploiting that $E[X] = q\floor{t} \ge 8(m'\ln(n'+1)+2\ln(n))$ yields 
\begin{align*}
  \Pr[T_i > t] 
    &\le \exp\left(-\tfrac{1}{8}  E[X]\right)\\
 &   \le \exp\left(-\ln(n)-m'\ln\left(n'+1\right)-\ln(n)\right)\\
 &   \le \exp\left(-\ln(m')-m'\ln\left(n'+1\right)-\ln(n)\right).
\end{align*}
Using a union bound over all $m'$ blocks gives that any fixed objective value on the Pareto front is not sampled in $t$ iterations with probability at most 
\begin{align*}
    m'  \exp\left(-\ln(m')-m'\ln\left(n'+1\right)-\ln(n)\right)
    = \exp\left(-m'\ln\left(n'+1\right)-\ln(n)\right).
\end{align*}

Let $E$ denote the event that after $\floor{t}$ iterations there is still an objective value $v$ on the Pareto front such that the respective individual $c$ does not have the correct number of leading 1-bits and trailing 0-bits in any block.
By applying a union bound over all objective values we have
\begin{align*}
  \Pr[E] &\le \frontLOTZ \cdot \exp\left(-m'\ln\left(n'+1\right)-\ln(n)\right)
   = \exp(-\ln(n)) = \tfrac{1}{n}.
\end{align*}
by observing $\frontLOTZ = \exp(m'\ln(n'+1))$.
Thus, $E$ does \emph{not} happen with probability $1-\frac{1}{n}$.
We note that this applies for arbitrary starting configurations, as all that we assumed about the initial population was that it is non-empty. 

We employ a simple restart argument to obtain an upper bound on the expected value of $F$.
Each sequence of $\floor{t}$ iterations fails to cover the Pareto front with probability at most $\frac{1}{n}$.
Due to the convergence of the geometric series we have 
    \[E[F] 
    \le 1+\sum_{i=0}^{\infty} \left(\tfrac{1}{n}\right)^i \floor{t}
    \le \left(1-\tfrac{1}{n}\right)^{-1} t+1.\qedhere\]
\end{proof}
While, unlike our other bounds for the GSEMO, this result does not improve over the existing $O(n^{2m'+1})$ bound \cite{LaumannsTZ04}, we note that our bound applies to all choices for the numbers of objectives while the previous one assumed it to be constant.

\subsection{$m\textsc{OneJumpZeroJump}_k$}
\label{sec:mOJZJ}
The main result in this subsection is the following runtime guarantee for the GSEMO on the many-objective \ojzj problem.

\begin{theorem}
\label{thm:upperBoundJump}
Let $m'\in \N_{\ge 2}$, $n'\in \N_{\ge 4}$, $n=m'n'$, $m= 2m'$, and $k\in[2...\frac{n'}{2}]$. 
Consider the GSEMO optimizing \mJump.
Let $F$ denote the number of fitness evaluations until the population covers the complete Pareto front and let 
 \[t = \left(\ln(4)m'+\ln(n)+\ln(m')\right)\tfrac{13}{3} e n^k  \SOJZJ.\]
Then $F \le t+1$ with probability at least $(1-\frac{1}{n})^{-5}=1-\Theta(\frac{1}{n})$ and
\[E[F] \le \left(1-\tfrac{1}{n}\right)^{-2} \left(\ln(4)m'+2\ln(m')\right)\tfrac{13}{3} e  n^k \SOJZJ +1.\]
\end{theorem}
Just as for the \mOMM benchmark, we obtain this runtime guarantee by explicitly analyzing the time until the corners are sampled. This once more is the dominant contribution to the runtime.
Unlike for the optimization for \mOMM, however, the \mJump benchmark requires a \emph{jump} over the $m'$ fitness valleys in order to sample a corner. We hence subdivide this first part into three phases.
First, we consider the time until a solution for an ``inner optimum'' is sampled, that is, a bitstring such that the number of 1-bits is at least $k$ and at most $n'-k$ in each block, see Lemma~\ref{lem:findInnerOptimum}. From this point on, we focus our analysis on objective values that are on the Pareto front and hence do not have to deal with solution values disappearing from the population by strict domination.
Starting with such an inner optimum, we consider the time until the population contains a solution for every ``extreme inner optimum'', which has precisely $k$ or $n'-k$ bits of value $1$ in each block, see Lemma~\ref{lem:walkToCliffs}. 
Next, we analyze how long it takes to cover all corner bitstrings starting from these solutions, see Lemma~\ref{lem:jumpToCorners}. This will be the dominant contribution to the runtime.
In the last phase, we consider the time to cover all remaining objective values, see Lemma~\ref{lem:coverInwardsJump}.

Formally, we define the set $\cliffs = \set{(a_1,\ldots,a_{m}) \mid a_i \in \set{2k, n'} \text{ for all } i\in[m]}$ of ``extreme inner optima'' to contain all objective values of individuals that in each block have either exactly $k$ bits of value~0 or exactly $k$ bits of value~1.
Further, we define the set $\cornersjump = \set{(a_1,\ldots,a_{m}) \mid a_i \in \set{k,2k, n', n'+k} \text{ for all } i\in [m']}$ to contain all objective values of individuals that in each block have either only 1-bits, only 0-bits, exactly $k$ bits of value~0, or exactly $k$ bits of value~1.

In order to be able to cross the valleys of low fitness, this benchmark requires $k$ simultaneous bit-flips. 
To this aim, we observe that the probability of creating any specific solution  $y$ that has a Hamming distance of $k$ to at least one solution $x$ in the current population is at least 
\begin{align*}
    \frac{1}{\SOJZJ} \cdot \left(\frac{1}{n}\right)^k \cdot \nootherflip[n-k] \ge \frac{1}{e n^k \SOJZJ} \eqqcolon q_k.
\end{align*}
We note that, due to some arguments in the proof of Lemma~\ref{lem:jumpToCorners}, the bounds we obtain for the \mJump benchmark are only applicable if $m' \ge 2$. For the case $m'=1$, we thus refer to previous results in the literature, which show that the expected number of iterations until the GSEMO solves the bi-objective $\maOJZJ_k$ problem is at most
 $e (\frac{3}{2}n^k + 2n \ln(\lceil\frac{n}{2}\rceil) +3)  S_{2,k}^{\maOJZJ}$
 \cite{ZhengD23ecj}.

 The following lemma bounds the time until the population contains an inner optimum by considering the creation of any Pareto optimum as an intermediate step.
 
\begin{lemma}\label{lem:findInnerOptimum} 
Let $m'\in \N$, $n'\in \N_{\ge 4}$, $n=m'n'$, $m= 2m'$, and $k\in[2...\frac{n'}{2}]$.  Consider the GSEMO optimizing \mJump. 
Let $T$ denote the number of iterations until the population contains a bitstring with at least $k$ and at most $n'-k$ bits of value $1$ in each block and let $t_0$ be
 \[\max\set{2km',8\ln(n)}\tfrac{1}{(n'-k)q}+\max\set{2m',8\ln(n)}\tbinom{\frac{n}{m'}}{k}^{-1} \tfrac{1}{q_k}.\]         
Then $T \le t_0$ with probability at least $(1-\frac{1}{n})^2.$
\end{lemma}
\begin{proof}
We first show that after 
\[s_1 \coloneqq \max\set{2km',8\ln(n)}\tfrac{1}{(n'-k)q}\]
iterations the population contains a solution on the Pareto front with probability at least $1-\frac{1}{n}$.
Suppose the population does not already contain such a solution as otherwise the bound trivially holds.
For every solution $x$ and block $i\in[m']$, let 
\[L_i(x) = \begin{cases}
    k-|x^i|_1,\hphantom{aaaaaa} \text{\quad if } 0 < |x^i|_1 < k;\\
    |x^i|_1 -(n'-k), \text{\quad if } n'-k < |x^i|_1 < n';\\
    0, \hphantom{aaaaaaaaaaaa} \text{\quad else.}\\
\end{cases}\]
Let $L(x) = \sum_{i\in[m']}L_i(x)$ and, for a population $P$, let $L(P) = \min_{x\in P}L(x)$. 

Then every generation decreases or preserves $L(P)$ as follows. Let $x\in P$ such that $L(x) = L(P)$.
By the definition of the GSEMO, the next population contains a solution $y\succeq x$.
Consider any block $i\in [m']$. 
Note that $y \succeq x$ and $L_i(x) = 0$ imply $L_i(y) = L_i(x) = 0$.
If $0 < |x^i|_1 < k$, then $y \succeq x$ implies that $|x^i|_1 \le |y^i|_1 \le n'-k$ or $|y^i|_1 = n'$, so $L_i(y) \le L_i(x)$. The symmetric case of $n'-k < |x^i|_1 < n'$ follows analogously. Thus $L(y) \le L(x)$.

As long as $L(P) > 0$, each iteration has a chance of at least $p\coloneqq (n'-k)q$ to decrease $L(P)$ as follows. Consider an individual $x$ with $L(x)=L(P)$ and let $i$ be such that $L_i(x) > 0$. There are at least $n'-k+1$ bits in this block that we can flip to decrease $L_i(x)$. Selecting $x$ for mutation and creating a Hamming neighbor that precisely flips any of these desirable bits has a chance of at least $p$.

Note that $L(P) \le km'$. For $j \in [s_1]$, let $X_j$ be independent random variables, each with a Bernoulli distribution with success probability $p$.
Let $X=\sum_{j=1}^{s_1} X_j$.
Then, by stochastic domination, the probability that after $s_1$ iterations we still have $L(P)>0$ is at most $\Pr[X < km']$.
By observing $E[X]= ps_1 \ge 2km'$ we have
\begin{align*}
    \Pr[X < km'] 
    \le  \Pr\left[X \le \tfrac{1}{2} E[X]\right].
\end{align*}
Then, by applying a multiplicative Chernoff bound, the probability to sample a Pareto optimum within $s_1$ iterations is at least
\begin{align*}
    1-\exp\left(-\tfrac{1}{8}  E[X]\right)
    \ge 1- \exp\left(-\ln(n)\right) = 1-\tfrac{1}{n}.
\end{align*}

Next, starting with any Pareto optimum in the population, we show that after 
\[s_2 \coloneqq \max\set{2m',8\ln(n)}\tbinom{n'}{k}^{-1} \tfrac{1}{q_k}\]
iterations the population contains an inner optimum with probability at least $1-\frac{1}{n}$.
Let $x$ be any Pareto optimal solution in the population. 
Each block in $x$ has either $0$, $n'$, or between $k$ and $n'-k$ bits of value 1.
For every  Pareto optimal solution $x$ let $L'(x)$ denote the number of blocks with precisely 0 or $n'$ bits of value 1. 
For a population $P$, let $L'(P)$ the minimum value of $L'(x)$ among all Pareto optimal solution $x\in P$.
As there is at least one Pareto optimal solution, we have $L'(P) \le m'$. 
The GSEMO preserves each Pareto optimal objective value represented in the population and $L'(x)$ can be inferred from the objective value of $x$, hence no iteration increases $L'(P)$.

As long as $L'(P) > 0$, each iteration has a chance of at least $p'\coloneqq \binom{n'}{k} q_k$ to decrease $L(P)$ as follows. Consider any Pareto optimal solution $x\in P$ with $L(x)=L(P)$ and consider any block $i$ with just 1-bits or just 0-bits. There are at least $\binom{n'}{k}$ combinations of $k$ bits in this block that decrease $L(x)$ upon a simultaneous flip. Selecting $x$ for mutation and creating a Hamming neighbor that precisely flips such a set of desirable bits has a chance of at least $p$. 
Note that $L'(P) \le m'$. For $j \in [s_2]$, let $X'_j$ be independent random variables, each with a Bernoulli distribution with success probability $p'$.
Let $X'=\sum_{j=1}^{s_2} X'_j$.
Then, by stochastic domination, the probability that after $s_2$ iterations we still have $L'(P)>0$ is at most $\Pr[X' < m']$.
By observing $E[X']= p's_2 \ge 2m'$ we have
\begin{align*}
    \Pr[X' < m'] 
    \le  \Pr\left[X' \le \tfrac{1}{2} E[X']\right].
\end{align*}
Then, by applying a multiplicative Chernoff bound, the probability to sample a Pareto optimum within $s_2$ iterations is at least
\begin{align*}
    1-\exp\left(-\tfrac{1}{8}  E[X']\right)
    \ge 1- \exp\left(-\ln(n)\right) = 1-\tfrac{1}{n}.
\end{align*}
Thus, with probability at least $(1-\frac{1}{n})^2$ the population contains an inner optimum after at most $s_1 + s_2 = t_0$ iterations.
\end{proof}
We note that in the above analysis we considered the progress made in different blocks to happen sequentially. Using similar techniques as in our other proofs to analyze the progress made in each block in parallel could improve the runtime for this phase by a factor of around $\frac{m'}{\ln(m')}$. However, as the runtime of the optimization for \mJump is in any case dominated by the other phases, we here gave the more direct proof with a slightly worse runtime guarantee.

The analysis of the phase of sampling all ``extreme inner optima'' $\cliffs$ is roughly similar to the one of the first phase on \mOMM in Lemma~\ref{lem:walkToCorners}, that is, we analyze separately the progress in each block towards the target solution value. 

Recalling the notation established in Section~\ref{sec:mOMM} for the analysis of \mOMM, we refer to a bitstring with $a_i$ bits of value one in block $i$ for each $i\in[m']$ as an \gabstring.
\begin{lemma}\label{lem:walkToCliffs} 
Let $m'\in \N$, $n'\in \N_{\ge 4}$, $n=m'n'$, $m= 2m'$, and $k\in[2...\frac{n'}{2}]$.  Consider the GSEMO optimizing \mJump starting with a population that contains at least one solution for an inner optimum. 
Let $T$ denote the number of iterations until the population covers \cliffs and let 
 \[t_1 = \left(\ln(2)\frac{m'}{\ln(n)}+2\right) \frac{1}{q}\ln \left(\frac{n}{m'}-2k\right).\]

 %
Then $T \le t_1$ with probability at least $1-\frac{1}{n}.$
\end{lemma}
\begin{proof}
Observe that by the symmetry of the problem and the operators, the time to cover any fixed objective value in \cliffs is the same for all elements in \cliffs.
Thus, we first give a tail bound for the time until the population contains a $(k,\ldots,k)$-bitstring and note that its objective value is in \cliffs.

Let $x_0$ be a bitstring for an inner optimum in the initial population. 
We bound the time until a $(k,\ldots,k)$-bitstring is sampled by bounding the time until a marked individual $c$ becomes a $(k,\ldots,k)$-bitstring.
Let initially $c = x_0$, that is, $c$ has at least $k$ bits of value $1$ in each block.
Whenever a Hamming neighbor $c'$ of $c$ is sampled, they differ in precisely one block. If the number 1-bits in that block of $c'$ smaller than in that block of $c$ and at least $k$, we update $c$ to be $c'$ and note that $c'$ is an inner optimum as well. 
As $c$ is always on the Pareto front, whenever the individual $c$ is removed from the population, there is $c'$ in the population with the same objective value and we replace $c$ by $c'$ without losing any progress.
The time until the population contains a $(k,\ldots,k)$-bitstring is at most the time until $c$ is a $(k,\ldots,k)$-bitstring.

Let $|c^i|_1$ denote the number of 1-bits in the $i$th block of $c$ and define $d_{c,i} = |c^i|_1 - k$.
Note that once $d_{c,i} = 0$ for some $i \in [m']$, the objective values of the $i$th block of $c$ will change no more and if $d_{c,i} = 0$ for all $i \in [m']$ then $c$ is a $(k,\ldots,k)$-bitstring.
We first give a tail bound on the time until $d_{c,i} = 0$ for any fixed $i \in [m']$.

Note that $d_{c,i}$ is non-increasing as every update of $c$ either decreases or preserves the number of 1-bits in each block. Every iteration has a chance of at least 
\begin{align*}
    |c^i|_1 q > d_{c,i} q 
    = \left(n'-2k\right)q \cdot \frac{d_{c,i}}{n'-2k}
    \eqqcolon p_{d_{c,i}}
\end{align*}
to decrease $d_{c,i}$ by sampling a Hamming neighbor $c'$ of $c$ that flips precisely one 1-bit in the $i$th block.
As $|x_0^i|_1 \le n'-k$, we have that after at most $n'-2k$ such iterations we have $d_c = 0$.
For $j \in [n'-2k]$, let $X_j$ be independent geometric random variables, each with success probability $p_j$, and let
    $X = 
    \sum_{j=1}^{n/m'-2k} X_j$.
Let $T_C$ denote the number of iterations until $c$ is a $(k,\ldots,k)$-bitstring. 
Then $X$ stochastically dominates $T_C$, and thus a tail bound for $X$ also applies to $T_C$.
By once more applying Theorem~1.10.35 in \cite{Doerr20bookchapter} we have
    \[\Pr\left[X\ge (1+\delta) \frac{1}{q} \ln \left(n'-2k\right)\right] \le n^{-\delta}\]
 for all $\delta \ge 0$.
 For $\delta = \ln(2)\frac{m'}{\ln(n)}+1$ we obtain that no $(k,\ldots,k)$-bitstring is sampled after $t_1$ iterations with probability at most $n^{-\ln(2)\frac{m'}{\ln(n)}-1}$.

Let $E$ denote the event that after $t_1$ iterations there is still an objective value in \cliffs such that the population does not contain a corresponding individual.
By applying a union bound we have
\begin{align*}
  \Pr[E] \le |\cliffs| \cdot n^{-\ln(2)\frac{m'}{\ln(n)}- 1} = \frac{1}{n}
\end{align*}
by observing $|\cliffs| = 2^{m'} = n^{\ln(2)m'/\ln(n)}$.
\end{proof}

We now turn to the most interesting phase, where we find a solution for each objective value in \cornersjump when starting with a population that contains a solution for each value in \cliffs (that is, the result of the second phase). This third phase dominates the optimization time since here the valleys of low objective values have to be crossed in the difficult direction, requiring $k$ simultaneous specific bit-flips each. 

\begin{lemma}\label{lem:jumpToCorners}
    Let $m'\in \N_{\ge 2}$, $n'\in \N_{\ge 4}$, $n=m'n'$, $m= 2m'$, and $k\in[2...\frac{n'}{2}]$. 
    Consider the GSEMO optimizing \mJump starting with a population that contains at least one individual of each objective value in \cliffs.
    Let $T$ denote the number of iterations until the population covers \cornersjump and let 
          \[t_2 = \left(\frac{\ln(4)m'+\ln(n)}{\ln(m')}+1\right) \frac{1}{q_k} \ln(m').\]
    Then $T \le t_2$ with probability at least $1-\frac{1}{n}$.
    Further, 
    \[E[T] \le \left(1-\frac{1}{m'}\right)^{-1} \left(\frac{\ln(4)m'}{\ln(m')}+2\right) \frac{1}{q_k} \ln(m').\]
\end{lemma}
\begin{proof}
Consider any objective value $v\in \cornersjump$ and let $a_1,\ldots,a_{m'}$ be such that every \gabstring has objective value $v$.
Let $c_0$ be any individual in the population with objective value in \cliffs that is closest to any \gabstring.
We bound the time until an \gabstring is sampled by bounding the time until a marked individual $c$ becomes an \gabstring.
Let initially $c = c_0$.
Whenever an individual $c'$ is sampled that has the same bit values as $c$ except it has flipped the remaining $k$ bits in any block, we update $c$ to be $c'$ and call this \emph{progress}.
Note that the objective value of $c$ is always on the Pareto front. Hence, when $c$ is removed from the population, there is $c'$ in the population with the same objective value and we replace $c$ by $c'$ without losing any progress.
The time $T_v$ until the population contains an \gabstring is at most the time until $c$ is an \gabstring.
Suppose $c$ differs from being an \gabstring in $i$ blocks.
Each iteration has a probability of at least $i q_k$ to yield progress. 
Note that after at most $m'$ iterations of progress, $c$ is an \gabstring.

Let $X_i$ be a independent geometric random variable for all $i \in [m']$, each with success probability $i q_k$, and let $X=\sum_{i=1}^{m'} X_i$.
Then $X$ stochastically dominates $T_v$, and thus a tail bound for $X$ also applies to $T_v$. 
By once more applying Theorem~1.10.35 in \cite{Doerr20bookchapter}, we have
\begin{align}
    \Pr\left[X\ge (1+\delta) \tfrac{1}{q_k} \ln(m') \right] \le {m'}^{-\delta}.
    \label{eq:cliffJump}
\end{align}
By choosing $\delta=\frac{\ln(4)m'+\ln(n)}{\ln(m')}$, the probability to not sample an \gabstring in the next $t_2$ iterations is at most 
\begin{align*}
 {m'}^{-\ln(4)\frac{m'}{\ln(m')}-\frac{\ln(n)}{\ln(m')}}
  = n^{-\ln(4)\frac{m'}{\ln(n)}-1}.
\end{align*}

Let $E$ denote the event that after $t$ iterations there is still an objective value $v\in \cornersjump$ such that the population does not contain a corresponding individual. 
By applying a union bound, we have
\begin{align*}
    \Pr[E] \le |\cornersjump| \cdot n^{-\ln(4)\frac{m'}{\ln(n)}-1} = \frac{1}{n}
\end{align*}
by observing $|\cornersjump| = 4^{m'} = n^{\ln(4)m'/\ln(n)}$.

For the expected value, consider Equation~(\ref{eq:cliffJump}) with $\delta = \ln(4)\frac{m'}{\ln(m')} +1$ and recall that $m' \ge 2$.
This gives that the runtime is at most 
\[t' \coloneqq \left(\frac{\ln(4)m'}{\ln(m')}+2\right) \frac{1}{q_k} \ln(m')\]
iterations with probability at least $m'^{-\ln(4)\frac{m'}{\ln(m')} -1}$.
Let $E'$ denote the event that after $t'$ iterations there is still an objective value $v\in \cornersjump$ such that the population does not contain a corresponding individual. 
By applying a union bound and observing $|\cornersjump| = {m'}^{\frac{\ln(4)m'}{\ln(m')}}$ we have
\begin{align*}
    \Pr[E'] \le |\cornersjump| \cdot {m'}^{-\ln(4)\frac{m'}{\ln(m')}-1} = \frac{1}{m'}.
\end{align*}
We employ a simple restart argument.
Each sequence of $t'$ iterations fails to cover the Pareto front with probability at most $\frac{1}{m'}$.
Due to the convergence of the geometric series we have 
    \[E[T] 
    \le \sum_{i=0}^{\infty}  \left(\frac{1}{m'}\right)^i t'
    = \left(1-\frac{1}{m'}\right)^{-1} t'.
    \qedhere
    \]
\end{proof}

We observe that $t_2$ has an additional summand of $\frac{\ln(n)}{\ln(m')}$ in the parenthesis that is not present in the bound of the expected value.
This additional summand is necessary to obtain the tail bound, as can be seen best when considering a constant value for~$m'$. 
Then, a constant number of events (jumps over the fitness valley) have to happen, each with an estimated waiting time of $e \SOJZJ n^k$. In order to obtain that all these events happen with high probability, that is, probability $1 - O(\frac{1}{n})$, in a given time, this time has to depend on $n$ to some degree, and this is the additional $\frac{\ln(n)}{\ln(m')}$ term.

When regarding the proof, we note that we have directly constructed a solution for each value in \cliffs, different from the proof for \omm, where we have first constructed all corners \corners and from these constructed the remaining solutions. We could have done the same here, but it would not have given a better runtime estimate. The reason is that the number of corners, $2^{m'}$, is not that much smaller than the number of values in \cliffs, which is $4^{m'}$. Hence the union bound over the larger numbers of values in \cliffs is, asymptotically, not more costly. Note that in our setting with exponential tails, the number of events a union bound is taken over influences the final result only logarithmically.

We now consider the part that accounts for generating solutions for the remaining part of the Pareto front. We note that for every missing solution, the objective values for each block lie either in the inner Pareto front or beyond one of the two fitness valleys. Thanks to the third phase, for each possible combination there is already a solution in the current population. Thus no more jumps or walks through the fitness valley are required and all that remains is a walk on the respective part of the Pareto front to generate the exact objective value. 
Apart from these changed preconditions, the analysis is comparable to the process of covering the remaining Pareto front of \mOMM in Lemma~3.
\begin{lemma}\label{lem:coverInwardsJump}
Let $m'\in \N_{\ge 2}$, $n'\in \N_{\ge 4}$, $n=m'n'$, $m= 2m'$, and $k\in[2...\frac{n'}{2}]$. 
Consider the GSEMO optimizing \mJump starting with a population that contains at least one individual of each objective value in \cornersjump.
Let $T$ denote the number of iterations until the population covers the complete Pareto front and let $t_3$ be
\begin{align*}
    \max\Big\{2\left(\frac{n}{2m'}-k\right), 8\ln(m')+8m'\ln\left(\frac{n}{m'}-2k+3\right)
    +8\ln(n)\Big\}\cdot \frac{2m'}{qn}.
\end{align*}
Then $T \le t_3$ with probability at least $1-\frac{1}{n}$.
\end{lemma}
\begin{proof}
Consider any objective value $v$ on the Pareto front that is not in \cornersjump. Let $a_1,\ldots,a_{m'}$ be such that every \gabstring has objective value $v$.
Let $c_0$ be any individual in the population with objective value in \cornersjump that is closest to any \gabstring.
We bound the time until an \gabstring is sampled by bounding the time until a marked individual $c$ becomes an \gabstring.
Let initially $c = c_0$.
Whenever a Hamming neighbor $c'$ of $c$ is sampled that is closer to any \gabstring than $c$, we update $c$ to be $c'$.
Note that the objective value of $c$ is always on the Pareto front. Hence, when $c$ is removed from the population, there is $c'$ in the population with the same objective value and we replace $c$ by $c'$ without losing any progress.
The time until the population contains an \gabstring is at most the time until $c$ is an \gabstring.

Due to the symmetry of the problem, we can without loss of generality for all $i\in [m']$ assume $a_i \le \frac{n'}{2}$ and $c_0$ to have no bits of value~1 if $a_i = 0$ and $k$ bits of value~1, otherwise.
We first give a tail bound on the time until $c$ has $a_i$ bits of value~1 in the $i$th block for any fixed $1\le i \le m'$.
The probability of increasing the number of 1-bits in block $i$ in an iteration by sampling a Hamming neighbor of $c$ that differs from $c$ in one of the at least $\frac{n'}{2}$ bits of value~0 in the $i$th block of $c$ is at~least
\begin{align*}
    p \coloneqq \frac{n'}{2}\cdot q.
\end{align*}
After at most $a_i - k \le n'-k$ such iterations, $c$ contains the correct number of 1-bits in the $i$th block. 
Thus, for the time $T_i$ until the $i$th block of $c$ is correct we have $E[T_i]\le (a_i-k)\cdot \frac{1}{p}$.
For $1\le j \le \floor{t_3}$, let $X_j$ be independent random variables, each with a Bernoulli distribution with success probability $p$.
Let $X=\sum_{j=1}^{\floor{t_3}} X_j$.
Then $\Pr[T_i > t_3] \le \Pr[X \le a_i-k - 1]$.
By observing $E[X]= p\floor{t_3} > 2(\frac{n'}{2}-k)-1$ we have  
\begin{align*}
    \Pr[X \le a_i-k-1] 
   & \le \Pr\left[X \le \tfrac{n'}{2}-k-1\right] 
   \le  \Pr\left[X \le \tfrac{1}{2} E[X]\right].
\end{align*}
Applying a multiplicative Chernoff bound yields
\begin{align*}
    \Pr&[T_i > t_3] 
    \le \exp\left(-\tfrac{1}{8}  E[X]\right)
    \le \exp\left(-\ln(m')-m'\ln\left(n'-2k+3\right)-\ln(n)\right).
\end{align*}
Using a union bound over all blocks gives that any fixed objective value on the Pareto front is not sampled in $\floor{t_3}$ iterations with probability at most 
\begin{align*}
    m' \cdot &\exp\left(-\ln(m')-m'\ln\left(n'-2k+3\right)-\ln(n)\right) \\
    &= \exp\left(-m'\ln\left(n'-2k+3\right)-\ln(n)\right).
\end{align*}

Let $E$ denote the event that after $\floor{t_3}$ iterations there is still an objective value in the Pareto front that does not have a respective individual in the population.
By applying a union bound we have
\begin{align*}
  \Pr[E] &\le \frontOJZJ \cdot \exp\left(-m'\ln\left(n'-2k+3\right)-\ln(n)\right)
    =  \exp(-\ln(n)) = \tfrac{1}{n}
\end{align*}
by observing
\begin{align*}
    \frontOJZJ = \left(n'-2k+3\right)^{m'} = \exp\left(m'\ln\left(n'-2k+3\right)\right). 
    \end{align*}
\end{proof}

From the partial results just proven, we can now derive our main result for the many-objective \ojzj problem.

\begin{proof}[Proof of Theorem~\ref{thm:upperBoundJump}]
Let $t_0, t_1, t_2$ and $t_3$ be as in Lemmas~\ref{lem:findInnerOptimum} to \ref{lem:coverInwardsJump}. For the second phase, $t_1$ is not defined if $k = \frac{n'}{2}$. We note that in this case $\cliffs$ consists of just one inner optimum with $\frac{n'}{2}$ bits of value $1$ in each block. As this is the only inner optimum, it is already covered as result of the previous phase. Hence let $t_1 = 0$ in this case. 

Let $t' = t_0 + t_1 + t_2 +t_3$.
Let $T$ denote the number of iterations until the population covers the complete Pareto front.
The combination of the lemmas gives that $T \le t'$ with probability at least $(1-\frac{1}{n})^5$.

Observe that for $q = \frac{1}{en\SOJZJ }$ and $q_k = \frac{1}{en^k\SOJZJ}$ we have that $t_2 \ge \max\set{t_1, t_3}$ and $\frac{4}{3}t_2 \ge t_0$ as $\binom{n'}{k} \ge 6$ with $2 \le k \le \frac{n'}{2}$. 
Thus, $t := \frac{13}{3}t_2 \ge t'$. As each iteration as well as the initialization takes one fitness evaluation, we have $F \le t+1$ with probability at least $(1-\frac{1}{n})^5$.

For the expected value, we employ the same simple restart argument as before, separately for each phase, and obtain that the expected number of iterations for the first, second and forth phase are at most $(1-\frac{1}{n})^{-2}t_0$, $(1-\frac{1}{n})^{-1}t_1$ and $(1-\frac{1}{n})^{-1} t_3$, respectively.
This holds as the arguments in the lemmas can be applied repeatedly, as all they assume about the initial population was that it is non-empty (for the first phase) and covers objective values generated in the previous phases (for the other phases).
The expected time of the second phase is bounded from above by
\begin{align*}
    \left(1-\tfrac{1}{m'}\right)^{-1} \left({\ln(4)m'}+2\ln(m')\right) e n^k \SOJZJ,
\end{align*}
see Lemma~\ref{lem:jumpToCorners}. By combining the bounds and adding $1$ for the initial evaluation we obtain
\[E[F] \le \left(1-\tfrac{1}{n}\right)^{-2} \left(\ln(4)m'+2\ln(m')\right)\tfrac{13}{3} e  n^k \SOJZJ +1.\qedhere\]
\end{proof}

\section{Extension of our Results to the SEMO, SMS-EMOA, an NSGA-II variant, NSGA-III, and SPEA2}
\label{sec:otherAlgos}

We started our mathematical runtime analysis of many-objective MOEAs with an analysis of the GSEMO, the most prominent MOEA in theoretical works. We now show that our methods also apply to many other classic MOEAs, including the prominent \SMS, \NSGAthree, and SPEA2. To this aim, we show the following meta-theorem.

\begin{theorem}[Meta-theorem]
    \label{thm:meta-theorem}
    Consider a MOEA $\calA$ that starts with a population $P_0 \neq \emptyset$ and in each iteration $t$ computes $P_t$ from $P_{t-1}$. Consider further a multi-objective optimization problem~$\calB$ over bitstrings of length $n$. Let $m'\in \N$, and $m=2m'$ be the number of objectives in $\calB$. Assume that there is $q\in \Q_{>0}$ such that the following properties are satisfied when $\calA$ optimizes $\calB$.
   
    \begin{enumerate}
        \item Monotonicity property: If in some iteration~$t \in \Z_{\ge 0}$, including the initialization, a solution $x$ is generated, then in all future iterations $t' \ge t$ the main population $P_{t'}$ of the algorithm contains a solution $x'$ such that $x' \succeq x$. 
        \item For each iteration $t\in \Z_{\ge 1}$ and each solution $x'$ that is a Hamming neighbor of some $x\in P_{t-1}$, the chance to generate $x'$ in iteration $t$ is at least $q$.
    \end{enumerate}
    Let $T$ denote the number of iterations until the population of $\calA$ covers the complete Pareto front of $\mathcal{B}$.
    \begin{itemize}
    \itemsep1em 
        \item If $\mathcal{B} = \mOMM$ and 
        \[
        t\coloneqq \left(\tfrac{\ln(2)m'+2}{\ln(n)} + 16 \tfrac{{m'}^2+2m'}{n} + 2\right) \tfrac{1}{q} \ln(n),
        \]
        then $T \le t$ with probability at least $(1-\frac{1}{n})^2 =1-\Theta(\tfrac{1}{n})$ and $E[T] \le (1-\frac{1}{n})^{-2} t$.

        \item If $\mathcal{B} =\ \mcocz$ and
        \[t \coloneqq\left(\tfrac{\ln(2)m'+2}{\ln(n)} + 16 \tfrac{{m'}^2+2m'}{n} + 4\right) \frac{1}{q} \ln(n),\]
        then $T \le t$ with probability at least $(1-\tfrac{1}{n})^3 =1-\Theta(\tfrac{1}{n})$ and $E[T] \le (1-\tfrac{1}{n})^{-3} t$.
        \item If $\mathcal{B} =\ \mlotz$ and
        \[t \coloneqq \max\left\{1, \tfrac{4 {m'}^2\ln\left(\frac{n}{m'}+1\right)+ 8 m'\ln(n)}{n}\right\} \tfrac{2}{q} n',\]
        then $T \le t$ with probability at least $1-\tfrac{1}{n}$ and $E[T] \le (1-\tfrac{1}{n})^{-1}t$.
        \item {Let $\mathcal{B} = \mJump$ with $m' \ge 2$,  $k\in[2 .. \frac{n}{2m'}]$ and $q_k \in \Q_{>0}$. Assume that for each iteration $t\in \Z_{\ge 1}$ and each solution $x'$ for which there is a solution $x\in P_{t-1}$ with Hamming distance exactly~$k$, the probability to generate $x'$ in iteration $t$ is at least $q_k$.} Let 
          \begin{multline*}
               t \coloneqq \big(14m'
              +  \tfrac{16{m'}^2}{n}\ln\left(\tfrac{n}{m'}-2k+3\right) 
            +6\ln(n)  +2\big)\tfrac{1}{q}\\
              + \tfrac{7}{3}\big(\ln(4)m'+\ln(n)+ \ln(m')\big) \tfrac{1}{q_k}.\hspace{2.75cm}
        \end{multline*}
       Then $T \le t$ with probability at least $(1-\frac{1}{n})^5 =1-\Theta(\frac{1}{n})$ and
       $E[T] \le (1-\tfrac{1}{n})^{-2}\Big(14m' +  \tfrac{16{m'}^2}{n}\ln\left(\frac{n}{m'}-2k+3\right) 
            +6\ln(n) +2\Big)\tfrac{1}{q}
             + (1-\tfrac{1}{m'})^{-1}\tfrac{7}{3}(\ln(4)m'+2\ln(m')) \tfrac{1}{q_k}$.
    \end{itemize}
\end{theorem}
\begin{proof}
    Note that all proofs for the runtime bounds in Section~\ref{sec:analysis_gsemo} did not leverage any specific properties of the GSEMO except for (i)~its monotonicity property and (ii)~that the probability that mutating a solution $x$ generates any specific bit string in Hamming distance one (or $k$ for \mJump) with probability at least $q$ (or $q_k$).
    In particular, note that the above proofs do not assume any particular value of $q$ and $q_k$ except when formulating the final bound.
{Thus, the proofs used for the GSEMO hold for the MOEA $\calA$ discussed in this theorem.
    The bounds in Theorems~\ref{thm:upperBoundOMM},~\ref{thm:upperBoundCOCZ},~and~\ref{thm:upperBoundLOTZ} before plugging in the value of $q$ and $q_k$ immediately give the first three stated guarantees,} where we note that we here give the bounds with respect to the number of iterations rather than the number of fitness evaluations as we cannot, like for the GSEMO, assume that $\calA$ uses one evaluation per iteration. 
    
    For \mJump, consider $t_0, t_1, t_2, t_3$ as in Lemmas~\ref{lem:findInnerOptimum} to \ref{lem:coverInwardsJump} and, as discussed in the proof of Theorem~\ref{thm:upperBoundJump}, let $t_1 = 0$ if $k = \frac{n'}{2}$.
    These lemmas extend to our general setting and, as in the proof of Theorem~\ref{thm:upperBoundJump}, we have that $T \le t_0 + t_1+t_2+ t_3$ with probability at least $(1-\frac{1}{n})^5$. With $n'=\frac{n}{m'}$ and
    \begin{align*}
  w_1 \coloneqq \Bigg(&\tfrac{\max\set{2km',8\ln(n)}}{n'-k}+\left(\ln(2)\tfrac{m'}{\ln(n)}+2\right)
  \ln \left(n'-2k\right)+ \\
   &\max\big\{\tfrac{n'}{2}-k, 4\ln(m')+4m'\ln\left(n'-2k+3\right)
    +4\ln(n)\big\}
    \cdot \tfrac{4m'}{n}\Bigg)\tfrac{1}{q}
    \end{align*}
    and $w_2 \coloneqq \tfrac{7}{3}(\ln(4)m'+\ln(n)+ \ln(m')) \tfrac{1}{q_k}$ and using similar estimates as in the proof of Theorem~\ref{thm:upperBoundJump}, we obtain $t_0 + t_1+t_2+t_3 \le w_1 + w_2$.

Usually we have $q_k = \Omega(qn)$ and thus $w_2$ dominates the running time. To improve readability, we hence give the slightly less tight estimate of 
\begin{align*}
    w_1 \le \Big(14m'
  +  \tfrac{16{m'}^2}{n}\ln\left(n'-2k+3\right) 
+6\ln(n)  +2\Big)\tfrac{1}{q},
\end{align*}
where we use the facts that 
\begin{align*}
   \frac{\max\set{2km', 8\ln(n)}}{n'-k} \le \frac{2km'+ 8\ln(n)}{k} \le  2m' + 4\ln(n),
\end{align*}
that
\begin{align*}
   \left(\ln(2)\tfrac{m'}{\ln(n)}+2\right) \ln \left(n'-2k\right)
   \le \ln(2)m' + 2\ln(n),
\end{align*}
that, as $n\ge 4$,
\begin{align*}
   (4\ln(m')+4\ln(n))\cdot \tfrac{4m'}{n} \le 32m' \tfrac{\ln(n)}{n} \le 12m',
\end{align*}
and that $(\tfrac{n'}{2}-k)\cdot \tfrac{4m'}{n} \le 2$.
With this estimate, we note that $t \ge w_1 + w_2$, where $t$ is as given in the theorem statement for \mJump, and thus $T \le t$ with probability at least $(1-\frac{1}{n})^5$.

For the expected optimization time we have
\begin{align*}
E[T] &\le (1-\tfrac{1}{n})^{-2} (t_0+t_1+t_3)
+ \left(1-\tfrac{1}{m'}\right)^{-1} \left(\ln(4)m'+2\ln(m')\right) \tfrac{1}{q_k}\\
&\le {(1-\tfrac{1}{n})^{-2} \Big(14m'
              +  \tfrac{16{m'}^2}{n}\ln\left(n'-2k+3\right) 
            +6\ln(n)  +2\Big)\tfrac{1}{q}}\\
             & {\hphantom{aaa}+ (1-\tfrac{1}{m'})^{-1} \tfrac{7}{3}(\ln(4)m'+2\ln(m')) \tfrac{1}{q_k}}
\end{align*}
by reusing the arguments given in the proof of Theorem~\ref{thm:upperBoundJump}.
\end{proof}

We note that the GSEMO fulfills the assumptions of the Theorem with $q = \frac{1}{enS}$ and $q_k = \frac{1}{en^kS}$ as discussed in Section~\ref{sec:analysis_gsemo} (recall that $S$ is the size of the largest incomparable set for the considered benchmark, that is, the maximum size of the population in any generation).
For these values, Theorem~\ref{thm:meta-theorem} yields the same run-time bounds for the GSEMO as obtained in Section~\ref{sec:analysis_gsemo} (apart from a small discrepancy in the case of \mJump due to our slightly loose estimates).

As illustrated by the GSEMO, we note that $q$ can be estimated as $q\coloneqq s p$, where $s$ is a lower bound on the probability to select any specific solution from the population for mutation in a given iteration and $p$ is a lower bound on the probability that a solution $x$ selected for mutation produces any given Hamming neighbor $x$. For $q_k \coloneqq s p_k$, we have that $p_k$ needs to bound the probability of producing any given solution of Hamming distance $k$ to $x$.

In the following, we use the meta-theorem to obtain runtime bounds for the SEMO, \SMS, an \NSGAtwo variant, the \NSGAthree, and the \SPEA on the discussed benchmarks. 
We note that the meta-theorem gives runtime bounds in terms of iterations, which better reflects the proof structure, whereas we state our bounds for the individual MOEAs in terms of the number of fitness evaluations.
This allows for an easier comparison of the algorithms, which do not all evaluate the same number of solutions per iteration.

\subsection{SEMO}

The first algorithm that we apply our meta-theorem to is the \emph{simple evolutionary multi-objective optimizer (SEMO)} proposed in~\cite{LaumannsTZ04}.
Apart from the mutation step, the SEMO and GSEMO are identical, see Algorithm~\ref{alg:gsemo}.
While the GSEMO uses bitwise mutation and thus flips bits independently, the SEMO uniformly at random selects one bit of the parent and flips that bit to create an offspring. With this local mutation operator, the SEMO cannot optimize jump-type benchmarks~\cite{DoerrZ21aaai}. For the unimodal benchmarks regarded in this work, we obtain very similar bounds as for the GSEMO. They are superior by a factor of $e$, an often observed difference between the local and the global mutation operator. 

More interestingly, exploiting the particular population dynamics of the SEMO on the \lotz problem, we can show that the population size often stays considerably below the size of a largest incomparable set. This allows to remove the, for the \mlotz problem very costly, size of the largest incomparable set from the runtime bound.

\begin{theorem}\label{thm:results_semo}
Let $m',n'\in \N$, $n=m'n'$, and  $m= 2m'$. 
Consider the SEMO optimizing a problem $\mathcal{B}$. Let $F$ denote the number of fitness evaluations until the population covers the complete Pareto front. 
    \begin{itemize}
    \itemsep1em 
        \item If $\mathcal{B} = \mOMM$ and 
        \[t\coloneqq \left(\tfrac{\ln(2)m'+2}{\ln(n)} + 16 \tfrac{{m'}^2+2m'}{n} + 2\right) n \ln(n) \SOMM ,\]
        then $F \le t+1$ with probability at least $(1-\frac{1}{n})^2 =1-\Theta(\frac{1}{n})$ and $E[F] \le (1-\frac{1}{n})^{-2} t+1$.
        \item If $\mathcal{B} =\ \mcocz$ and
        \[t \coloneqq\left(\tfrac{\ln(2)m'+2}{\ln(n)} + 16 \tfrac{{m'}^2+2m'}{n} + 4\right) n \ln(n) \SCOCZ,\]
        then $F \le t+1$ with probability at least $(1-\frac{1}{n})^3 =1-\Theta(\frac{1}{n})$ and $E[F] \le (1-\frac{1}{n})^{-3} t +1$.
        \item Let  $\mathcal{B} =\ \mlotz$ and 
        \begin{align*}
            r \coloneqq \max\left\{1, \tfrac{4 {m'}^2\ln\left(\tfrac{n}{m'}+1\right)+ 8 m'\ln(n)}{n}\right\} \tfrac{2n^2}{m'}.
        \end{align*}
        For all values of $m$, we have 
        $F \le r \SLOTZ+1$ with probability at least $1-\frac{1}{n}$ and $E[F] \le (1-\frac{1}{n})^{-1}r \SLOTZ+1$.

        For $m=o({n / \ln n)}$, we have $F \le r\cdot \left(O(n' \ln n)\right)^{m'}$ with probability at least $(1-\frac{1.01}{n})^2 =1-\Theta(\frac{1}{n})$.
        
        For $m=o(\sqrt{n / \ln n'})$, we have 
        \[E[F] \le r \cdot \left(O(n \ln n \right))^{m'} = \tfrac{n^2}{m} \left(O(n \ln n \right))^{m'}.\]
    \end{itemize}
\end{theorem}

\begin{proof}
Recall that the only difference between the SEMO and GSEMO is the mutation step. 
While the GSEMO flips each bit independently with probability $\frac{1}{n}$, the SEMO uniformly at random selects any bit and flips it.
Hence, the requirements (1) and (2) of Theorem~\ref{thm:meta-theorem} hold with $q \coloneqq \frac{1}{nS}$, where $S$ is the size of the largest incomparable set for the considered benchmark.
The initialization as well as each iteration takes one fitness evaluation.

The proof for the stronger bound for \mlotz requires several new and longer arguments, so we present it in separate lemmas in the following.
\end{proof}


We now analyze the runtime of the SEMO on \mlotz in the case that $m$ is not overly large (that is, we prove the last two assertions of the preceding theorem). The central observation is that the more restricted mutation operator of the SEMO allows us to better understand the maximum size of the population. In particular, we can show that the population size stays considerably below the maximum size of an incomparable set (we recall that for the \mlotz problem, this size is much larger than the size of the Pareto front). This allows to use our meta-theorem with a much higher probability for generating a particular neighbor of a solution in the population (essentially, we can replace the maximum size of an incomparable set $\SLOTZ$ by our estimate for the population size). 

In the following proofs, the distance of a solution from the Pareto front within a block will be a crucial notation. Let $x \in \{0,1\}^n$. Let $i \in [m']$. Recall that we agreed to denote the $i$-th block of $x$ by $x^i$. If $x$ is on the Pareto front, we have $\LO(x^i) + \TZ(x^i) = n'$. Hence we define the distance of $x$ from the Pareto front in the $i$-th block by
\[
d_i(x) = n'  - (\LO(x^i) + \TZ(x^i)).
\]

The key to our improved runtime bound for the SEMO on \mlotz is that we can bound the population size better than via the size of the largest incomparable set. The first step towards this is the observation that the population size remains one (at least) as long as no block has reached the Pareto front.

\begin{lemma}\label{lem:semo_pop_size_1}
  Consider a run of the SEMO on the \mlotz benchmark. If at some time $t$ the parent population $P_{t-1}$ of the SEMO consists of a single solution $x$ such that $d_i(x) > 0$ for all $i \in [m']$, then $|P_t|=1$ with probability one.
\end{lemma}

\begin{proof}
  Since $d_i(x) > 0$ for all $i$, a single bit flip applied to $x$ can change the objective value of at most one objective function~$f_j$. If this is a decrease, the offspring is discarded. If this is an increase, then the offspring replaces the parent. If no objective value is changed, the offspring also replaces the parent. Since in all cases exactly one of parent and offspring is taken into the next generation, we have $|P_t|+1$.
\end{proof}

In the following lemma, we show that while the population consists of a single individual~$x$, the value of $d_i(x)$ decreases at a similar rate in each block $i$. Hence at a suitable moment, the population consists of a single individual with all $d_i(x)$-values small. 

\begin{lemma}\label{lem:semo_progress_while_one_solution}
Let $m',n'\in \N$, $n=m'n'$, and  $m= 2m'$.
Consider a run of the SEMO on the \mLOTZ benchmark. Let $\frac{3}{n'} \le \delta < \frac{1}{2}$ and $t_0 \coloneqq (1-2\delta)\frac{n^2}{4m'}$. 
Then with probability at least
\mbox{$ 1-4m'\exp(-\frac{\delta^2(1-2\delta)n'}{24}+\frac{\delta^2}{6n})$}, the population after $\floor{t_0}$ iterations consists of a single individual~$x$ such that 
\begin{align*}
d_i(x) 
\le \delta  n'\left(3-\tfrac{9}{4}\delta+\tfrac{1}{2}\delta^2\right)+\tfrac{1}{2}\delta^2
\end{align*}
for all $i\in [m']$.
\end{lemma}

\begin{proof}
We have seen in the previous lemma that while all $d_i(x)$ values are positive, the population of the SEMO consists of a single individual. Since this is the regime this lemma takes place in, let us for the ease of language regard a fake version of the SEMO which is identical to the SEMO except when the offspring is incomparable to the parent; in this case the fake SEMO takes the offspring into the population and removes the parent. With this modification, we have ensured that the population $P_t$ always consists of a single individual $x(t)$. 

Let $i \in [m']$ be a block. To analyze how the individual $x(t)$ approaches the Pareto front, we analyze the growth of $Z_t^{(i)} := \LO(x^i(t)) + \TZ(x^i(t))$; we note that $Z_t^{(i)} + d_i(x(t)) = n'$. Let us use momentarily the shorthand $Z_t := Z_t^{(i)}$. From the definition of the algorithm and the \mlotz benchmark, we observe the following. Let $U$ be the sum of two independent geometric random variables (always taking values starting at~$1$) with success rate~$\frac 12$. Then for all $j \in [0..n'-2]$, the probability distribution of $Z_0$ agrees with $U-2$, that is, we have $\Pr[Z_0=j] = \Pr[U=j+2]$. Assume that for some $t$, we have $Z_t < n'$ (and hence also $Z_t \le n'-2$). Then the probability that $Z_{t+1} > Z_t$ is exactly $\frac 2n$, because there are exactly two bits such that flipping them increases $Z_t$, namely the first bit following the leading-ones segment and the last bit before the trailing-zeros segment of $x^i(t)$. In case such an improvement happens, the actual change $Z_{t+1}-Z_t$ is described essentially by a geometric distribution with success rate~$\frac 12$ because the subsequent bits are still independently and uniformly distributed (this is a classic argument from the analysis of the single-objective \leadingones problem, see, e.g., \cite{BottcherDN10,Sudholt13,Doerr19tcs}). Again, this is true only while $Z_{t+1} \le n'-2$. So formally, let $X_t$ be a Bernoulli random variable with success probability $\frac 2n$ and $U_t$ be a geometric random variable with success rate~$\frac 12$, then $Z_{t+1}$ is identically distributed to $Z_t + X_t U_t$ for all values that are at most $n'-2$.

Using an elementary induction, we obtain the following description of $Z_{\floor{t_0}}$, the progress after $\floor{t_0}$ iterations. Let $X_1, \dots, X_{\floor{t_0}}$ be Bernoulli random variables with success probability~$\frac 2n$. Let $Y_1, Y_2, \dots$ be an infinite sequence of geometric random variables with success rate $\frac 12$. Assume all these to be mutually independent.
Let $X = \sum_{t=1}^{\floor{t_0}} X_t$.
Let $Z = -2 + \sum_{t=1}^{X+2} Y_t$. Then $Z$ and $Z_{\floor{t_0}}$ are identically distributed on all values of at most $n'-2$, that is, $\Pr[Z=j]=\Pr[Z_{\floor{t_0}}=j]$ for all $j \in [0..n'-2]$.



We now analyze the random variable~$Z$, starting with a probabilistic upper bound. To this aim, we first regard~$X$. By linearity of expectation, $E[X] = \sum_{\ell=1}^{\floor{t_0}} E[X_\ell] = \floor{t_0} \frac 2n,$ so $\frac 12 (1-2\delta) n' -\tfrac{2}{n} < E[X] \le \frac 12 (1-2\delta) n'$. Let $A$ be the event that $X\ge (1+\tfrac{\delta}{2})E[X] \eqqcolon r$. 
Using a multiplicative Chernoff bound, we have 
\begin{align*} 
 \Pr\left[A\right]
   \le  \exp\left(-\frac{(\delta/2)^2 E[X]}{3}\right)
   = \exp\left(-\frac{\delta^2}{12} E[X]\right).
\end{align*}
 
Further, let $Z' = \sum_{t=1}^{\floor{r}+2} Y_t$ and let $B$ be the event that $Z' \ge (1+\tfrac{\delta}{2})E[Z']$.
Using a Chernoff bound for geometric random variables, e.g.,  \cite[Theorem~1.10.32~(a)]{Doerr20bookchapter} we obtain 
\begin{align*}
    \Pr[B] 
      \le \exp\left(-\frac{(\delta/2)^2 (\floor{r}+1)}{2(1+\delta/2)}\right)
   \le \exp\left(-\frac{\delta^2}{12} E[X]\right).
\end{align*}
We have $E[Z']=2\floor{r}+4 \le 2(1+\tfrac{\delta}{2})E[X]+4$.  Thus, if $B$ does not happen, then
\begin{align*}
    Z'  < (1+\tfrac{\delta}{2})E[Z'] 
    &\le (1+\tfrac{\delta}{2})^2 (1-2\delta)n'+2\delta+4 \\
   & < (1-\delta)n'+2\delta +4 < n' +2,
\end{align*}
where the last estimate uses $2\delta+2 < 3$ and $\delta \ge \frac{3}{n'}$. Observe that if $A$ does not happen, then $Z \le Z' -2$. Thus if neither $A$ nor $B$ happens, we have $Z < n'$.


We now bound $Z$ from below. 
Let $A'$ be the event that $X \le (1-\tfrac{\delta}{2})E[X]\eqqcolon r'$ and 
$B'$ be the event that $Z'' := \sum_{t=1}^{\ceil{r'}+2} Y_t \le (1-\tfrac{\delta}{2})E[Z'']$.
Using again Chernoff bounds for independent binary and geometric random variables, e.g., \cite[Theorem 1.10.5 and 1.10.32~(b)]{Doerr20bookchapter}, we have
\begin{align*}
    \Pr[A'] & \le \exp\left(-\frac{\delta^2E[X]}{8}\right)
  \le \exp\left(-\frac{\delta^2}{12} E[X]\right),\\
    \Pr[B']
    &\le 
    \exp\left(-\frac{\delta^2 (\ceil{r'}+2)}{8-16\delta/3}\right)\\
    &\le 
    \exp\left(-\frac{\delta^2(1-\delta/2)E[X]}{8-16\delta/3}\right)
   \le \exp\left(-\frac{\delta^2}{12} E[X]\right).
\end{align*}
If neither $A'$ nor $B'$ happen then 
\begin{align*}
    Z \ge Z''-2 &> (1-\tfrac{\delta}{2})E[Z''] -2 
    \ge (1-\tfrac{\delta}{2})(2(1-\tfrac{\delta}{2})E[X]+4) -2\\
    &= (1-\tfrac{\delta}{2})(2(1-\tfrac{\delta}{2})E[X])+ 2 -2\delta\\
    &> (1-\tfrac{\delta}{2})(2(1-\tfrac{\delta}{2})(\tfrac{1}{2}(1-2\delta)n'-\tfrac{2}{n}))+ 2 -2\delta\\
& = (1-\tfrac{\delta}{2})^2((1-2\delta)n'-\tfrac{4}{n})+ 2 -2\delta \\
 & > (1-\tfrac{\delta}{2})^2(1-2\delta)n' - \tfrac{1}{2}\delta^2
\end{align*}
Thus, with probability at least
\begin{align*}
 1 - \Pr[A \lor B \lor A' \lor B'] 
 \ge  1-4\exp\left(-\frac{\delta^2}{12} E[X]\right)\\
 \ge  1-4\exp\left(-\frac{\delta^2(1-2\delta)n'}{24}+\frac{\delta^2}{6n}\right)
\end{align*}
we have 
\[
(1-\tfrac{\delta}{2})^2(1-2\delta)n' - \tfrac{1}{2}\delta^2  \le Z \le  (1-\delta)n'+2\delta +2 < n'.
\]
As discussed above, the same probabilistic estimate holds for $Z_{t_0}^{(i)}$. Via a union bound over all blocks, we see that with probability at least $1-4m'\exp\left(-\frac{\delta^2(1-2\delta)n'}{24}+\frac{\delta^2}{6n}\right)$, all $Z_{t_0}^{(i)}$, $i \in [m']$, satisfy the above bounds.
Now  Lemma~\ref{lem:semo_pop_size_1} ensures that no incomparable solutions were ever generated up to time $t_0$, and thus our fake version of the SEMO behaved identical to the true SEMO. Hence the same estimates hold for the true SEMO. 
The claim follows as 
\[n' - \left((1-\tfrac{\delta}{2})^2(1-2\delta)n' - \tfrac{1}{2}\delta^2 \right)
= \delta n'\left(3-\tfrac{9}{4}\delta+\tfrac{1}{2}\delta^2\right)+\tfrac{1}{2}\delta^2.\]
\end{proof}

\begin{lemma}\label{lem:semo_bound_popsize}
Let $m',n'\in \N$, $n=m'n'$,  $m= 2m'$, and $0 \le D \le n'$.
Consider a run of the SEMO on the \mLOTZ benchmark such that at some time $t_0$, the parent population $P_{t_0-1}$ of the SEMO consists of a single solution $z$ such that $d_i(z) \le D$ for all $i\in [m']$. Then for all $t \ge t_0$, we have $|P_{t}| \le (n'+1+D^2)^{m'}$.
\end{lemma}

\begin{proof}
  We analyze which solutions can newly enter the population, more precisely, what is the objective value of such solutions. Since each mutation (which is a random bit flip) concerns only a single block, let us regard a single block $i \in [m']$ determining the two objectives $f_{2i-1}$ and $f_{2i}$. We first analyze new solutions $y$ with $d_i(y) > 0$. If the parent $x$ of the mutation operation satisfies $d(x)=0$, then no offspring $y$ of $x$ with $d(y)>0$ can enter the population (as it is dominated by $x$). If $d_i(x)>0$, then the offspring $y$ can only enter the population if $f_{2i-1}(y) \ge f_{2i-1}(x)$ and $f_{2i}(y) \ge f_{2i}(x)$. By induction, this implies $f_{2i-1}(y) \ge f_{2i-1}(z)$ and $f_{2i}(y) \ge f_{2i}(z)$.
  
  Consequently, any individual $y$ ever entering the population either satisfies $d(y) = 0$ or satisfies $f_{2i-1}(y) \ge f_{2i-1}(z)$ and $f_{2i}(y) \ge f_{2i}(z)$ and $d(y)>0$. The $y$ with $d_i(y)=0$ can have exactly $n' + 1$ different values for $(f_{2i-1}(y),f_{2i}(y))$. The possible values for $(f_{2i-1}(y),f_{2i}(y))$ of the $y$ with $f_{2i-1}(y) \ge f_{2i-1}(z)$ and $f_{2i}(y) \ge f_{2i}(z)$ and $d(y)>0$ are 
  \[
  \{(f_{2i-1}(z)+a,f_{2i}(z)+b) \mid a,b \in [0..D-2], a+b\le D-2\},
  \]
  which has cardinality at most $D^2$. Consequently, there are at most $n' + 1 + D^2$ different values that can occur as $(f_{2i-1}(y),f_{2i}(y))$, $y \in P_{t}$. 

  Regarding all blocks, we see that there are at most $(n' + 1 + D^2)^{m'}$ different objective values which can possibly occur in $P_{t}$, which is therefore an upper bound on the population size of $P_{t}$.  
\end{proof}

\begin{proof}[Proof of the asymptotic bounds for $\mlotz$ in Theorem~\ref{thm:results_semo}]
Let
\[\delta_1 = \sqrt{72 \cdot \frac{\ln(n)+\ln(4m')}{n'}}.\]
We immediately see that $\delta_1 \ge \frac{3}{n'}$.
Since $m' = o(n/\ln(n))$, we have $n'=\frac{n}{m'}=\omega(\ln(n)+\ln(m'))$, and thus $\delta_1 \le \frac{1}{3}$ for sufficiently large $n$. 
From this and again the definition of $\delta_1$, we estimate
\begin{align*}
p_1 &\coloneqq  4m'\exp\left(-\delta_1^2 (1-2\delta_1) \frac{n'}{24}+\frac{\delta^2}{6n}\right)\\
 &\le  \exp\left(\ln(4m')-\delta_1^2\frac{n'}{3\cdot 24}\right)\cdot\exp\left(\frac{1}{54n}\right) < \frac{1.01}{n},
\end{align*}
using $n\ge 2$ and $\exp(1/108) < 1.01$.
By observing that $\delta_1 = O(\sqrt{\ln(n)/n'}\,)$, we further have that

\begin{align*}
D_1  &\coloneqq\delta_1 n'\left(3-\tfrac{9}{4}\delta_1+\tfrac{1}{2}\delta_1^2\right)+\tfrac{1}{2}\delta_1^2
%
= O(\delta_1 n') = O\left(\sqrt{\ln(n)n'}\right)
 \intertext{and thus}
S_1  &\coloneqq (n'+1+D_1^2)^{m'} = \left(O(\ln(n)n')\right)^{m'}.
\end{align*}
Since $\delta_1 \le \frac{1}{3}$, right from the definition of $D_1$ we obtain that $D_1 < n'$ for sufficiently large $n$.
Using Lemma~\ref{lem:semo_progress_while_one_solution}, the probability that for a run of the SEMO on \mlotz there is an iteration $t_0$ as specified in Lemma~\ref{lem:semo_bound_popsize} with $D\coloneqq D_1$ is at least $1-p_1 \ge 1-\frac{1.01}{n}$. By Lemma~\ref{lem:semo_bound_popsize}, the
population size will not grow beyond $S_1$ in the subsequent generations.
Thus, in each of these generations the probability to select any specific individual for mutation is at least $\frac{1}{S_1}$.
Applying Theorem~\ref{thm:meta-theorem} with $P_{t_0}$ as the initial population and setting $q\coloneqq \frac{1}{nS_1}$ yields the stated probabilistic runtime guarantee.

For the bound on the expected value of the runtime, different from the rest of the paper, we cannot use a restart argument since in this proof we build on the algorithm starting with a population of size one. Consequently, we have to work with different values for $\delta$ and $D$, ensuring that the failure probability for reaching a small population is so small that we can estimate the population size trivially in the failure case.

To this aim, let
\[\delta_2 = \sqrt{72\cdot \frac{2m'\ln(n'+1)+\ln(4m')}{n'}}.\]
Again we easily see that $\delta_2 \ge \frac{3}{n'}$. 
Since $m' = o(\sqrt{n/\ln(n'+1)})$, we have $n'=\tfrac{m'n}{m'^2}=\omega(\frac{m'n}{n/\ln(n'+1)}) = \omega(m'\ln(n'+1))$, and thus $\delta_2 \le \frac{1}{3}$ for sufficiently large $n$. Then
\begin{align*}
p_2 &\coloneqq  4m'\exp\left(-\delta_2^2 (1-2\delta_2) \frac{n'}{24}+\frac{\delta^2}{6n}\right)
 < 1.01  \exp\left(\ln(4m')-\delta_1^2\frac{n'}{3\cdot 24}\right) \\
 & = 1.01 \exp\left(-2m'\ln(n'+1)\right) = 1.01 (n'+1)^{-2m'}.
\end{align*}
By observing that $\delta_2 = O(\sqrt{m'\ln(n'+1)/n'})$, we have that
\begin{align*}
D_2 & \coloneqq \delta_2 n'\left(3-\tfrac{9}{4}\delta_2+\tfrac{1}{2}\delta_2^2\right)+\tfrac{1}{2}\delta_2^2
= O(\delta_2 n') = O\left(\sqrt{m'\ln(n'+1)n'}\right),
\end{align*}
and $\delta_2 \le \frac{1}{3}$ yields $D_2 < n'$ for sufficiently large $n$. Further,
\begin{align*}
S_2  \coloneqq (n'+1+D_2^2)^{m'} 
= \left(O(m'\ln(n'+1)n'\right)^{m'}
= \left(O(m'n'\ln(n)\right)^{m'}
\end{align*}

For a run of the SEMO on \mlotz, let $A$ be the event that the condition of Lemma~\ref{lem:semo_bound_popsize} is satisfied for $t_0 \coloneqq (1-2\delta_2)\frac{n^2}{4m'}$ and $D\coloneqq D_2$.
By Lemma~\ref{lem:semo_progress_while_one_solution} we have $\Pr[\neg A] \le p_2$.
We apply Theorem~\ref{thm:meta-theorem}, where we consider the initial population to be $P_{t_0}$.
If $A$ happens, we apply the theorem with $q\coloneqq \frac{1}{nS_2}$ as the population cannot grow beyond size $S_2$. 
Otherwise, we use the standard estimate $q\coloneqq \frac{1}{n\SLOTZ}$. The resulting bounds on the expected number of iterations are
\begin{align*}
    t_A &= (1-\tfrac{1}{n})^{-1}r S_2+1,\\
    t_{\neg A} &= (1-\tfrac{1}{n})^{-1}r \SLOTZ+1.
\end{align*}
We obtain 
\begin{align*}
    E[F] &\le t_0 + (1-p_2) t_A + p_2 t_{\neg A} \\
    &< t_0 + (1-\tfrac{1}{n})^{-1}r \left(S_2 + p_2\SLOTZ\right) +1
    = O\left(r S_2\right) 
\end{align*}
as desired, noting that $p_2 \SLOTZ < 1.01 (n'+1)^{-2m'} (n'+1)^{2m'-1} < 1$.
\end{proof}

\subsection{SMS-EMOA and $(\mu+1)$-SIBEA}
The \SMS works with a population of fixed size $\mu$. Similar to the (G)SEMO, it produces one offspring solution in each generation.
We consider the \SMS using uniform bitwise mutation just like employed by the GSEMO.
While the (G)SEMO only relies on the (strict) domination relation to select the surviving solutions for the next generation, the \SMS sorts solutions into fronts $F_1,\ldots, F_{i^*}$, where each front contains all pairwise not strictly dominating solutions that are not yet represented in an earlier front.
Then one element of $F_{i^*}$ is removed to reduce the population size back to $\mu$, namely one with smallest hypervolume contribution.
For some set $S$ and reference point $r$, the hypervolume of $S$ is 
$\text{HV}_r(S) = \mathcal{L}\left(\bigcup_{u\in S} \set{h\in\mathbb{R}^m \mid r\le h\le f(u)}\right),$
where $\mathcal{L}$ is the Lebesgue measure.
The hypervolume contribution of an individual of an individual $x\in F$ is  $\Delta_r(x,F) = \text{HV}_r(F) - \text{HV}_r(F\setminus\set{x})$. Since we only regard maximization problems with non-negative objective values, as common, we use the reference point $r=(-1,\dots,-1)$.
Algorithm~\ref{alg:smsemoa} states the \SMS in pseudocode.

\begin{algorithm2e}[t]%
\Input{%
objective function $f=(f_1,\ldots, f_m)$,\\
length of bitstrings $n\in\N$
}
    Let $P_0$ be a set of $\mu$ elements of $\{0,1\}^n$ chosen independently and uniformly at random\\
    \For{$t = 1, 2, \ldots$}{
        Select $x$ from $P_{t-1}$ uniformly at random and let $x'\coloneqq \mutate(x)$\\
        Divide $R_t\coloneqq P_{t-1}\cup \set{x'}$ into fronts $F_1,\ldots,F_{i^*}$, by fast-non-dominated-sort() \cite{DebPAM02}.\\
        Pick $z' \in \arg\min_{z\in F_{i^*}} \Delta_r(z,F_{i^*})$ uniformly at random and let $P_t \coloneqq R_t \setminus{z'}$
    }
\caption{SMS-EMOA with population size $\mu$ and bit-wise mutation $\mutate(\cdot)$.}
\label{alg:smsemoa}
\end{algorithm2e}%

{We remark that the simplified version of the \SIBEA as presented in \cite{BrockhoffFN08} is nearly identical to the \SMS as it uses the same selection and reproduction scheme. In fact, the only difference is their treatment of individuals that are not in the first non-dominated front for surviving into the next generation: while the \SMS uses the hypervolume contribution only as a tie breaker for individuals on the last non-dominated front, the \SIBEA skips the non-dominated sorting and drops a solution with minimum hypervolume contribution from the whole parent and offspring population.  
}

Our meta-theorem is applicable to the \SMS and the \SIBEA when $\mu$ is at least as large as the size $S$ of a largest set of incomparable solutions.
The resulting bounds are identical to the ones obtained for the GSEMO except that the factor $S$, which is the upper bound for the population size of the GSEMO used in our proofs, is replaced by the population size $\mu$ of the \SMS or \SIBEA.

\begin{theorem}\label{thm:boundsSMS}
   Consider the \SMS or the \SIBEA optimizing a problem $\mathcal{B}$ with a largest incomparable set of size $S$.
   Let this MOEA use population size $\mu \ge S$ and let $F$ denote the number of fitness evaluations until the population covers the complete Pareto front. Then, 
     \begin{itemize}
    \itemsep1em 
        \item If $\mathcal{B} = \mOMM$ and 
        \[t\coloneqq \left(\tfrac{\ln(2)m'+2}{\ln(n)} + 16 \tfrac{{m'}^2+2m'}{n} + 2\right) e  n \ln(n) \mu,\]
        then $F \le t+\mu$ with probability at least $(1-\frac{1}{n})^2 =1-\Theta(\frac{1}{n})$ and $E[F] \le (1-\frac{1}{n})^{-2} t+\mu$.
        \item If $\mathcal{B} =\ \mcocz$ and
        \[t \coloneqq \left(\tfrac{\ln(2)m'+2}{\ln(n)} + 16 \tfrac{{m'}^2+2m'}{n} + 4\right)e  n \ln(n) \mu,\]
        then $F \le t+\mu$ with probability at least $(1-\frac{1}{n})^3 =1-\Theta(\frac{1}{n})$ and $E[F] \le (1-\frac{1}{n})^{-3} t +\mu$.
        \item If $\mathcal{B} =\ \mlotz$ and 
        \[t \coloneqq \max\left\{1, \tfrac{4 {m'}^2\ln\left(n'+1\right)+ 8 m'\ln(n)}{n}\right\} 2 e \tfrac{n^2}{m'} \mu,\]
        then $F \le t+\mu$ with probability at least $1-\frac{1}{n}$ and $E[F] \le (1-\frac{1}{n})^{-1}t+\mu$.
     \item If $m' \ge 2$, $k\in[2...\frac{n'}{2}]$, $\mathcal{B} = \mJump$, and 
     \begin{align*}
         t \coloneqq \left(\ln(4)m'+\ln(n)+\ln(m')\right)\tfrac{13}{3} e n^k  \mu,
     \end{align*}
    then $F \le t+\mu$ with probability at least $(1-\frac{1}{n})^5 =1-\Theta(\frac{1}{n})$ and
    \begin{align*}
      E[F] \le \left(1-\tfrac{1}{n}\right)^{-2} \left(\ln(4)m'+2\ln(m')\right)\tfrac{13}{3} e  n^k \mu +\mu.
    \end{align*}
    \end{itemize}
\end{theorem}
\begin{proof}
    The \SMS fulfills the monotonicity property required by Theorem~\ref{thm:meta-theorem} \cite[Lemma~4]{ZhengD24}.
    The \SIBEA is monotonic for similar reasons: if a solution $x$ is non-dominated, either there is a solution $x'$ with the same objective value or $x$ has positive hypervolume contribution.  
    In the first case, at least one of the solutions $x, x'$ survives into the next iteration. 
    In the latter case, $x$ survives as there are at most $S \le \mu$ solutions with positive hypervolume contribution, hence the combined parent and offspring population contains at least one solution with hypervolume contribution zero.
    
    Each iteration, the \SMS or \SIBEA selects a solution from the population for mutation uniformly at random, so any given solution $x$ in the population is selected with probability at least $\frac{1}{\mu}$.
    As we assume that the same mutation operator as for the GSEMO is employed, we set $q\coloneqq \frac{1}{en\mu}$ and $q_k \coloneq \frac{1}{en^k\mu}$.
    Each iteration takes one fitness evaluation and the initialization takes $\mu$ fitness evaluations.
{The bounds for \mJump follow by using the tighter estimates for $w_1$ and $w_2$ in the proof of Theorem~\ref{thm:meta-theorem} and combining them like we did for the GSEMO in the proof of Theorem~\ref{thm:upperBoundJump}, the only difference being that $\SOJZJ$ is replaced by $\mu$.}
\end{proof}

\subsection{NSGA-II, balanced NSGA-II, and NSGA-III}
We brief{}ly describe the central characteristics of the \NSGAtwo and \NSGAthree and refer to the original works~\cite{DebPAM02,DebJ14} or the theoretical works~\cite{ZhengD23aij,WiethegerD23} for more details.
Both algorithms work with a population of fixed size $\mu$. 
Each iteration, every individual produces an offspring, here we assume by uniform bitwise mutation. 
The combined population of size $2\mu$ is sorted into ranks, each containing all solutions  only dominated by those with lower rank.
The new generation of size $\mu$ is selected by prioritizing lower rank solutions.
As a tiebreaker, the \NSGAtwo employs the crowding distance, while the \NSGAthree uses reference points in the solution space.

Our meta-theorem does not apply to the \NSGAtwo (except for two objective) as this algorithm does not satisfy the monotonicity property: even with a large population, the \NSGAtwo can lose non-dominated objective values as observed in \cite{ZhengD24many}. 
In contrast, the \NSGAthree preserves non-dominated solutions for reasonable choices of the population size and reference points~\cite{OprisDNS24}.

Very recently, however, a second tiebreaker (after the crowding distance) was proposed that gives the monotonicity property to the \NSGAtwo~\cite{DoerrIK25}. For the resulting \emph{balanced \NSGAtwo}, the following result follows from~\cite[Lemma~4]{DoerrIK25} and\cite[equation~(1)]{ZhengD24many}.

\begin{lemma}\label{lem:nsga2_variant_monotonic}
  Consider some multi-objective optimization problem such that the largest incomparable set of solutions has size at most~$S$ and such that, for all $i \in [1..m]$, the number of different objective values in the $i$-th objective is $\nu_i$. Consider solving this problem via the balanced \NSGAtwo~\cite{DoerrIK25}. If the population size satisfies $\mu \ge S + 2 \sum_{i=1}^m \nu_i$, then the monotonicity property is fulfilled.
\end{lemma}

\begin{proof}
   By equation~(1) of \cite{ZhengD24many}, $U := 2 \sum_{i=1}^m \nu_i$ is an upper bound on the number of individuals having positive crowding distance in any incomparable set of solutions. Since $\mu \ge S+U$, Lemma~4 of~\cite{DoerrIK25} shows the ``survival property'' that a non-dominated objective value in the combined parent and offspring population survives into the next generation, that is, if the first front of the non-dominated sorting of $R_t$ contains a solution $x$, then the next generation $P_{t+1}$ contains a solution $y$ such that $f(y) = f(x)$. This survival property easily implies our monotonicity property.
\end{proof}

With the monotonicity property of the balanced \NSGAtwo and the \NSGAthree, our meta-theorem gives the following result.

\begin{theorem}
\label{thm:boundsNSGA}
   Consider the balanced \NSGAtwo of \cite{DoerrIK25} or the \NSGAthree optimizing a problem $\mathcal{B}$ with a largest incomparable set of size $S$. In case of the balanced \NSGAtwo, denote by $\nu_i$ the number of different objective values in the $i$-th objective and assume that $\mu \ge S+2\sum_{i=1}^m \nu_i$. In the case of the \NSGAthree, assume that all objective values are non-negative and are at most $\fmax$, that $\mu \ge S$, and that the \NSGAthree employs a set of reference points $\mathcal{R}_p$ as defined in \cite{OprisDNS24} with $p \ge 2m^{3/2}\fmax$.
     Let $F$ denote the number of fitness evaluations until the population covers the complete Pareto front.  
    \begin{itemize}
    \itemsep1em 
        \item If $\mathcal{B} = \mOMM$ and 
        \[t\coloneqq \left(\tfrac{\ln(2)m'+2}{\ln(n)} + 16 \tfrac{{m'}^2+2m'}{n} + 2\right) e  n \ln(n) \mu,\]
        then $F \le t+\mu$ with probability at least $(1-\frac{1}{n})^2 =1-\Theta(\frac{1}{n})$ and $E[F] \le (1-\frac{1}{n})^{-2} t+\mu$.
        \item If $\mathcal{B} =\ \mcocz$ and
        \[t \coloneqq \left(\tfrac{\ln(2)m'+2}{\ln(n)} + 16 \tfrac{{m'}^2+2m'}{n} + 4\right)e  n \ln(n) \mu,\]
        then $F \le t+\mu$ with probability at least $(1-\frac{1}{n})^3 =1-\Theta(\frac{1}{n})$ and $E[F] \le (1-\frac{1}{n})^{-3} t +\mu$.
        \item If $\mathcal{B} =\ \mlotz$ and 
        \[t \coloneqq \max\left\{1, \tfrac{4 {m'}^2\ln\left(n'+1\right)+ 8 m'\ln(n)}{n}\right\} 2 e \tfrac{n^2}{m'} \mu\]
        then $F \le t+\mu$ with probability at least $1-\frac{1}{n}$ and $E[F] \le (1-\frac{1}{n})^{-1}t+\mu$.
     \item If $m' \ge 2$, $k\in[2...\frac{n'}{2}]$, $\mathcal{B} = \mJump$, and 
     \begin{align*}
         t \coloneqq \left(\ln(4)m'+\ln(n)+\ln(m')\right)\tfrac{13}{3} e n^k  \mu,
     \end{align*}
    then $F \le t+\mu$ with probability at least $(1-\frac{1}{n})^5 =1-\Theta(\frac{1}{n})$ and
    \begin{align*}
      E[F] \le \left(1-\tfrac{1}{n}\right)^{-2} \left(\ln(4)m'+2\ln(m')\right)\tfrac{13}{3} e  n^k \mu +\mu.
    \end{align*}
    \end{itemize}
\end{theorem}
\begin{proof}
    Both MOEAs with the given choices of population size and reference points fulfill the monotonicity property, see Lemma~\ref{lem:nsga2_variant_monotonic} and~\cite{OprisDNS24}.
    Each iteration, all solutions in the current population are mutated by bitwise-mutation like for the GSEMO, so the second property holds with $q \coloneqq \frac{1}{en}$ and we have $q_k \coloneqq \frac{1}{en^k}$. 
    The initialization as well as each iteration takes $\mu$ fitness evaluations.
    For the bounds for \mJump we once more used the tighter estimates for $w_1$ and $w_2$ in the proof of Theorem~\ref{thm:meta-theorem}.
\end{proof}

Even though the \NSGAthree and the \NSGAtwo variant require fewer iterations than the (G)SEMO and \SMS, these algorithms only perform one fitness evaluation per iteration. As here we mutate $\mu$ solutions (and hence perform $\mu$ fitness evaluations) in each iteration, the runtime bounds on these variants of the \NSGAtwo with respect to the number of fitness evaluations indeed exactly match the bounds on the \SMS (apart from the additional $\mu-1$ fitness evaluations in the initialization).

\subsection{SPEA2}
\label{sec:spea}

Yet another MOEA massively used in practice is the \emph{Strength Pareto Evolutionary Algorithm 2} (SPEA2) proposed by Zitzler, Laumanns, and Thiele~\cite{ZitzlerLT01} (only a technical report, but cited more than 10,000 times according to Google scholar). 

Due to the different language and setup, this algorithm appears very different from the remaining algorithms considered in this work. However, a closer look at the algorithm details reveals that it is actually very similar to the \NSGAtwo, in particular, when noting that what is called an archive in~\cite{ZitzlerLT01} is rather a parent population. Also, the uncommon order of the steps in the main loop, first a selection for survival and then the creation of offspring, can be rearranged to the more common order of first creating offspring and then selecting the next parent population from the existing parents and offspring. 

We give such a reformulation of the SPEA2 in Algorithm~\ref{alg:spea}. We let the reader confirm that, apart from the presentation, it is identical to the SPEA2 defined in~\cite{ZitzlerLT01}. 
The only difference is that we specify what happens in the case that the offspring population size~$\lambda$ is smaller than the parent population size~$\mu$, in this case, the original SPEA2 appears not well-defined. 
For the convenience of the reader, we describe in Table~\ref{tab:conversion} how to convert the language and notation of~\cite{ZitzlerLT01} into our language and notation, which follow the ones traditionally used for many other classic MOEAs.

We do not fully specify here how the next parent population is selected from the combined parent and offspring population. For our purposes, it suffices to know that the first priority is given to non-dominated solutions. If these are too many, a complicated hierarchy of distances $\sigma^k$ is used as secondary criterion. This has, in particular, the property that solutions for which another solution with identical objective value exists are removed first. If there are less than $\mu$ non-dominated solutions, the next parent population is filled up with dominated solutions. Since our proofs only rely on non-dominated solutions, the particular details of this step are not relevant here.

\begin{algorithm2e}[ht]%
\Input{%
objective function $f=(f_1,\ldots, f_m)$,\\
length of bitstrings $n\in\N$
}
  $P_0 := \emptyset$\\
    \For{$t = 1, 2, \ldots$}{
      \If{$t=1$}{
       Let $Q_1$ be a set of $\max\{\lambda,\mu\}$ elements of $\{0,1\}^n$ chosen independently and uniformly at random
      }  
      \Else{
        $Q_t := \emptyset$\\
        \For{$i=1$ \KwTo $\lambda$}{
          Select $x \in P_{t-1}$ uniformly at random\\
          $Q_t := Q_t \cup \{\mutate(x)\}$\\
        }
      }
      Let $P_{t}$ be the set of non-dominated solutions in $P_{t-1} \cup Q_t$\\
      \If{$|P_{t}| > \mu$}{
         Remove individuals in $P_{t}$ using the $\sigma^k$ distances until $|P_{t}|=\mu$    
      }
      \ElseIf{$|P_{t}| < \mu$}{
         Add individuals from $(P_{t-1} \cup Q_t) \setminus P_{t}$ to $P_{t}$, based on a strength-based quality indicator, until $|P_{t}|=\mu$
      }
    }
\caption{An equivalent formulation of the SPEA2 with parent population size~$\mu$, offspring population size~$\lambda$, and bit-wise mutation $\mutate(\cdot)$.}
\label{alg:spea}
\end{algorithm2e}%

\begin{table}[]
   \centering
    \renewcommand{\arraystretch}{1.5}
    \begin{tabular}{l|l}
       \toprule
       Original SPEA2~\cite{ZitzlerLT01}  & Our SPEA2 formulation \\\midrule
       archive $\overline P_{t}$  & parent population $P_t$\\\hline
       archive size $\overline N$ & parent population size $\mu$\\\hline
       population $P_t$ & offspring population $Q_t$\\\hline
       population size $N$ & offspring population size $\lambda$\\\bottomrule
    \end{tabular}
    \vspace{2mm}
\caption{Comparison of the notation in~\cite{ZitzlerLT01} and our formulation of the SPEA2.}
    \label{tab:conversion}
\end{table}


A simple consequence of Lemma~4 in~\cite{RenBLQ24} is that the SPEA2 fulfills the monotonicity property when $\mu$ is large enough.

\begin{lemma}\label{lem:speamono}
  Consider some multi-objective optimization problem such that the largest incomparable set of solutions has size at most~$S$. Consider solving this problem with the SPEA2 with parent population size~$\mu$. If $\mu \ge S$, then the monotonicity property is satisfied.
\end{lemma}

\begin{proof}
  Consider some solution $x$ generated in some iteration~$t$ or present in the parent population of this iteration. If $x$ is non-dominated in the combined parent and offspring population, then by Lemma~4 of \cite{RenBLQ24}, some solution $y$ with $f(y)=f(x)$ survives into the next parent population. If $x$ is dominated by some solution $z$, then a solution $y$ with $f(y) = f(z)$ survives, and this is a solution dominating $x$. This shows the monotonicity property.
\end{proof}

With the monotonicity property, we now easily derive from Theorem~\ref{thm:meta-theorem} the following runtime guarantees.

\begin{theorem}\label{thm:boundsSPEA2}
   Consider the \SPEA optimizing a problem $\mathcal{B}$ with a largest incomparable set of size at most~$S$.
   Let the \SPEA use a parent population of size~$\mu \ge S$ and an offspring population of size~$\lambda \ge 1$.
     Let $F$ denote the number of fitness evaluations until the population covers the complete Pareto front.
    \begin{itemize}
    \itemsep1em 
      \item If $\mathcal{B} = \mOMM$ and
        \[t \coloneqq \left(\tfrac{\ln(2)m'+2}{\ln(n)} + 16 \tfrac{{m'}^2+2m'}{n} + 2\right) (\lambda + en\mu) \ln(n),\]
        then $F \le t+\max\set{\lambda, \mu}$ with probability at least $(1-\frac{1}{n})^2 =1-\Theta(\frac{1}{n})$ and $E[F] \le (1-\frac{1}{n})^{-2} t+\max\set{\lambda, \mu}$.
        \item If $\mathcal{B} =\ \mcocz$ and
        \[t \coloneqq \left(\tfrac{\ln(2)m'+2}{\ln(n)} + 16 \tfrac{{m'}^2+2m'}{n} + 4\right)(\lambda + en\mu) \ln(n),\]
        then $F \le t+\max\set{\lambda, \mu}$ with probability at least $(1-\frac{1}{n})^3 =1-\Theta(\frac{1}{n})$ and $E[F] \le (1-\frac{1}{n})^{-3} t +\max\set{\lambda, \mu}$.
        \item If $\mathcal{B} =\ \mlotz$ and 
        \begin{align*}
            t \coloneqq \max\Big\{1, \tfrac{4 {m'}^2\ln\left(n'+1\right)+ 8 m'\ln(n)}{n}\Big\}  \, 2 (\lambda + en\mu) n'
        \end{align*}
        then $F \le t+\max\set{\lambda, \mu}$ with probability at least $1-\frac{1}{n}$ and $E[F] \le (1-\frac{1}{n})^{-1}t+\max\set{\lambda, \mu}$.
     \item If $m' \ge 2$, $k \in [2..\frac{n'}{2}]$, $\mathcal{B} = \mJump$, and  
     \begin{align*}
         t \coloneqq  \big(14m'
              +  \tfrac{16{m'}^2}{n}\ln\left(n'-2k+3\right) 
            +6\ln(n)  +2\big)\\
            \cdot(\lambda + en\mu)
              + \tfrac{7}{3}\big(\ln(4)m'+\ln(n)+ \ln(m')\big)(\lambda + en^k\mu),
     \end{align*}
    then $F \le t+\max\set{\lambda, \mu}$ with probability at least $(1-\frac{1}{n})^5 =1-\Theta(\frac{1}{n})$ and $E[F] \le (1-\tfrac{1}{n})^{-2} \Big(14m' +  \tfrac{16{m'}^2}{n}\ln\left(n'-2k+3\right) 
            +6\ln(n) +2\Big)\\\cdot(\lambda+en\mu)
             + (1-\tfrac{1}{m'})^{-1}\tfrac{7}{3}(\ln(4)m'+2\ln(m')) (\lambda+en^k\mu) + \max\set{\lambda,\mu}$.
    \end{itemize}
\end{theorem}
\begin{proof}
  We first show that the assumptions of Theorem~\ref{thm:meta-theorem} are satisfied. For the monotonicity property, this was just seen in Lemma~\ref{lem:speamono}. 
  The probability that in one iteration we select a particular parent individual and mutate it into a particular solution in Hamming distance one is at least $1 - (1 - \frac{1}{en\mu})^\lambda \ge 
  1 - \frac{en\mu}{en\mu+\lambda} = \frac{\lambda}{\lambda+en\mu}\eqqcolon q$, where the first inequality stems from Lemma~2 in~\cite{AntipovD21algo}, which again builds heavily on arguments from~\cite{RoweS14}.
  Similarly, the probability to generate a particular solution in Hamming distance~$k$ is at least $q_k \eqqcolon \frac{\lambda}{\lambda+en^k\mu}$.
  The initialization takes $\max\set{\lambda,\mu}$ fitness evaluations and each iteration performs $\lambda$ fitness evaluations.
 For our bounds we note that $\frac{\lambda}{q} = \lambda + en\mu$ as well as $\frac{\lambda}{q_k} = \lambda + en^k\mu$.
\end{proof}

We note that in the case $\lambda=1$, our estimates above immediately give the minimally stronger estimates $q = \frac 1 {en\mu}$ and $q_k = \frac 1 {en^k \mu}$. With these, the bounds proven in the theorem become identical to the ones previously proven for the \SMS.

Our main result in this subsection considerably improves and extends the results in the first runtime analysis for the SPEA2 in~\cite{RenBLQ24}. For this comparison, note that their work denotes the parent population size (in our language) by $\bar\mu$ and the offspring population size by $\mu$. To reduce the confusion, we shall state all results in our notation. We also note that all runtime bounds in~\cite{RenBLQ24} mistakenly depend on the offspring population size instead of the parent population size (that is, any $\mu$ should be replaced by $\bar\mu$ in~\cite{RenBLQ24}); this mistake is corrected in the arxiv version of that paper. 

With this correction and in our notation, \cite{RenBLQ24} gives a runtime guarantee of $O(\mu n^2)$ for the \mLOTZ  benchmark, whereas ours is $O(\mu n \max\{1,\frac{\lambda}{\mu n}\} \cdot\max\{\frac{n}{m}, m \log(\frac{n}{m})\})$. The missing dependence on $\lambda$ in the bound in~\cite{RenBLQ24} is explained by the, somewhat hidden, assumption that $\lambda = c\mu$ for some $c \in [\frac 1\mu, O(1)]$, see the paragraph before Section 3.1 in~\cite{RenBLQ24}. In this regime, our bound is superior to the one of \cite{RenBLQ24} for all $m$ that are $\omega(1)$ and $o(n)$. 

In the interest of brevity, we spare a detailed comparison for the other benchmarks, but note that there again our results extend those of~\cite{RenBLQ24} to arbitrary offspring sizes and that they are asymptotically stronger for many parameter settings.

\section{Heavy-Tailed Mutation}
In the previous sections we exclusively considered algorithms that employed uniform bitwise mutation as a mutation operator or, in case of the SEMO, flip a single bit chosen uniformly at random.
In order to demonstrate the generalizability of the meta-theorem (Theorem~\ref{thm:meta-theorem}) and to extend our runtime bounds to another well-established mutation operator, in this section we consider heavy-tailed mutation.

The heavy-tailed mutation operator, also called \emph{fast mutation}, was proposed in~\cite{DoerrLMN17} and has since found numerous successful applications~\cite{WuQT18,FriedrichGQW18,AntipovBD22}. It also has inspired heavy-tailed random choices of other parameters such as offspring population sizes, selection pressures, and others~\cite{DangELQ22,AntipovBD24,DoerrKV24}. In multi-objective optimization, speed-ups by a factor of $k^{\Omega(k)}$ from heavy-tailed mutation were shown for the GSEMO~\cite{ZhengD23ecj} and \NSGAtwo~\cite{DoerrQ23tec} optimizing the bi-objective \jump problem, and for the \SMS optimizing the many-objective jump problem~\cite{ZhengD24}.

The heavy-tailed mutation operator flips each bit independently, just like uniform bitwise mutation. 
Instead of a fixed probability such as~$\frac{1}{n}$ to flip a bit, it uses the probability~$\frac{\alpha}{n}$, where $\alpha$ is sampled from a power-law distribution with parameter~$\beta$, independently for each application of the mutation operator. 
Formally, for this power-law distribution with parameter $\beta$ we have $\Pr[\alpha = i] = (\sum^{n/2}_{j=1} j^{-\beta})^{-1} i^{-\beta}$ for all $i\in [1\ldots,\frac{n}{2}]$.
Crucially, as proven in~\cite{DoerrLMN17}, for all $i\in [n/2]$,  this operator has a chance of $\Omega(i^{-\beta})$ to flip exactly $i$ bits.

\begin{theorem}
    \label{thm:heavyTailedMutation}
 Consider a modified version of the GSEMO, \SMS, \SIBEA, \NSGAthree, or \SPEA that uses heavy-tailed mutation with constant parameter $\beta$ as the mutation operator. 
 Then, apart from constant factors, the upper bounds of Theorems~\ref{thm:upperBoundOMM}, \ref{thm:upperBoundCOCZ}, and ~\ref{thm:upperBoundJump} also hold for the modified GSEMO. Likewise, the asymptotic orders of magnitude of the upper bounds of Theorems~\ref{thm:boundsSMS}-\ref{thm:boundsSPEA2} hold for the modified \SMS, \SIBEA, \NSGAthree, and \SPEA, respectively. 
 For each of the discussed algorithms with heavy-tailed mutation, the guarantees on $\mJump$  improve by a factor of $k^{\Omega(k)}$.
\end{theorem}

\begin{proof}
Changing the mutation operator does not alter the monotonicity property nor the probability to select a specific individual for mutation.
Since the heavy-tailed mutation operator with constant probability flips one bit (uniformly chosen), we can apply the meta-theorem with asymptotically the same value as before, and from this obtain all results  for \mOMM, \mcocz, and \mlotz.

Concerning \mJump, mutating a solution $x$ into any specific bitstring of Hamming distance $k$ to $x$ has a probability of 
\[\Omega\left(\frac{k^{-\beta}}{\binom{n}{k}}\right) = \Omega(e^{-k} k^{k+0.5-\beta} n^{-k}) = k^{\Omega(k)}n^{-k},\]
where we estimate 
$\binom{n}{k} \le \frac{n^k}{k!} \le \frac{n^k}{\sqrt{2\pi} k^{k+0.5}e^{-k}}$ using  $k! \ge \sqrt{2\pi} k^{k+0.5}e^{-k}$.
This allows for a value of $q_k$ that is by a factor of $k^{\Omega(k)}$ larger than in the proof so far, giving then the claimed speed-up for \mJump. 
\end{proof}

\section{Conclusion}

In this work, we revisited the problem of proving performance guarantees for MOEAs dealing with more than two objectives. In the first major progress after the initial work on this question twenty years ago~\cite{LaumannsTZ04}, we analyzed the runtime of many classic algorithms on four classic benchmarks. Most of our runtime guarantees are linear in the size of the Pareto front (apart from small factors polynomial in $n$ and $m$), in contrast to the previous bounds, which were all quadratic in the size of the Pareto front. Our results thus suggest that MOEAs cope much better with many-objective problems than known. In fact, our work indicates that the performance loss observed when increasing number of objectives in experiments is caused by the increasing size of the Pareto front rather than by particular algorithmic difficulties of many-objective optimization. 

An obvious next step in this research direction would be to advance from synthetic benchmarks to classic combinatorial optimization problems, e.g., the multi-objective minimum spanning tree problem regarded so far only for two objectives~\cite{Neumann07,DoNNS23,CerfDHKW23}. A technical challenge we could not overcome is to determine the runtimes for the many-objective LOTZ problem, where we do not have the obvious lower bound of the Pareto front size. We note, though, that lower bounds for LOTZ are already very difficult in the bi-objective setting~\cite{DoerrKV13}. 

\section*{Acknowledgments}
Simon Wietheger acknowledges the support by the Austrian Science Fund (FWF, Grant DOI 10.55776/Y1329).
Benjamin Doerr's research benefited from the support of the FMJH program PGMO.
This work has profited from many deep discussions at the Dagstuhl Seminars \href{https://www.dagstuhl.de/23361}{23361 \emph{Multiobjective Optimization on a Budget}} and \href{https://www.dagstuhl.de/24271}{24271 \emph{Theory of Randomized Optimization Heuristics}}.


  \bibliographystyle{elsarticle-num} 
\bibliography{alles_ea_master,ich_master,rest}






} 
\end{document}